\newtheorem{theorem}{Theorem}[section]
\newtheorem{lemma}{Lemma}[section]
\newcommand{\N}{\mathbb{N}}
\newcommand{\R}{\mathbb{R}}
\newcommand{\D}{\mathcal{D}}
\newcommand{\E}{{\rm{E}}}
\newcommand{\V}{{\rm{Var}}}
\newcommand{\Cov}{{\rm{Cov}}}
\newcommand{\PR}{{\rm{P}}}
\newcommand{\q}{\quad}
\newcommand{\f}{^\forall}
\newcommand{\bx}{{\bm{x}}}
\newcommand{\argmax}{\operatornamewithlimits{argmax}}
\newcommand{\1}{\mbox{1}\hspace{-0.25em}\mbox{l}}
\title{Bayesian Experimental Design for Finding Reliable Level Set under Input Uncertainty}
\date{}
\author{Shogo Iwazaki \thanks{Nagoya Institute of Technology}
    \and Yu Inatsu \thanks{RIKEN Center for Advanced Intelligence Project}
    \and Ichiro Takeuchi \footnotemark[1] \footnotemark[2] \thanks{National Institute for Materials Sciences} \thanks{email:takeuchi.ichiro@nitech.ac.jp}}
\begin{document}
%


\maketitle
\begin{abstract}
In the manufacturing industry, it is often necessary to repeat expensive operational testing of  machine in order to identify the range of input conditions under which the machine operates properly.
Since it is often difficult to accurately control the input conditions during the actual usage of the machine, there is a need to guarantee the performance of the machine after properly incorporating the possible variation in input conditions.
In this paper, we formulate this practical manufacturing scenario as an \emph{Input Uncertain Reliable Level Set Estimation (IU-rLSE)} problem, and provide an efficient algorithm for solving it.
The goal of IU-rLSE is to identify the input range in which the outputs smaller/greater than a desired threshold can be obtained with high probability when the input uncertainty is properly taken into consideration.
We propose an active learning method to solve the IU-rLSE problem efficiently, theoretically analyze its accuracy and convergence, and illustrate its empirical performance through numerical experiments on artificial and real data.
\end{abstract}

\section{Introduction}\label{sec:intro}
In the manufacturing industry, it is often necessary to repeat operational testing of machine in order to identify the range of input conditions under which the machine operates properly.
When the cost of an operational test is expensive, it is desirable to be able to identify the region of appropriate input conditions in as few operational tests as possible.
If we regard the operational conditions as inputs and the results of the operational tests as outputs of a black-box function, this problem can be viewed as a type of active learning (AL) problem called \emph{Level Set Estimation (LSE)}.
LSE is defined as the problem of identifying the input region in which the outputs of a function are smaller/greater than a certain threshold.
In the statistics and machine learning literature, many methods for the LSE problem have been proposed~\cite{bryan2006active,gotovos2013active,zanette2018robust}.

In practical manufacturing applications, since it is often difficult to accurately control the input conditions during the actual usage of the machine, there is a need to guarantee the performance of the machine after properly incorporating the possible variation of input conditions.
In this paper, we formulate this practical manufacturing problem as an \emph{Input Uncertain Reliable Level Set Estimation (IU-rLSE)} problem, and provide an efficient algorithm for solving it.
The goal of IU-rLSE is to identify the input region in which the probability of observing an output smaller than a specified threshold is sufficiently large, when the input uncertainty is taken into account.
Figure \ref{fig:setting_image} illustrate the basic idea of IU-rLSE problem.

We define the reliability of an input point as the probability of observing outputs smaller than a specified threshold, and the reliable input region as the subset of the input region in which the reliability is greater than a certain probability threshold (e.g., 0.95).
Under the assumption that the prior distribution of the true function follows a Gaussian Process (GP), we propose a novel Bayesian experimental design (c.f., active learning) method to identify the reliable input region in as few function evaluations as possible, and call the method the \emph{IU-rLSE} method (with slight abuse of terminology).
Specifically, we extend an acquisition function (AF) from an ordinary LSE problem so that the input uncertainty is properly taken into account, and develop a reasonable approximation of the AF for which expensive integral calculations are necessary unless our approximation is used.
We theoretically analyze the accuracy and convergence of the proposed IU-rLSE method, and illustrate its numerical performance by applying the method to both synthetic and real datasets.

\begin{figure*}[t]
    \begin{center}
        \includegraphics[scale=0.5]{./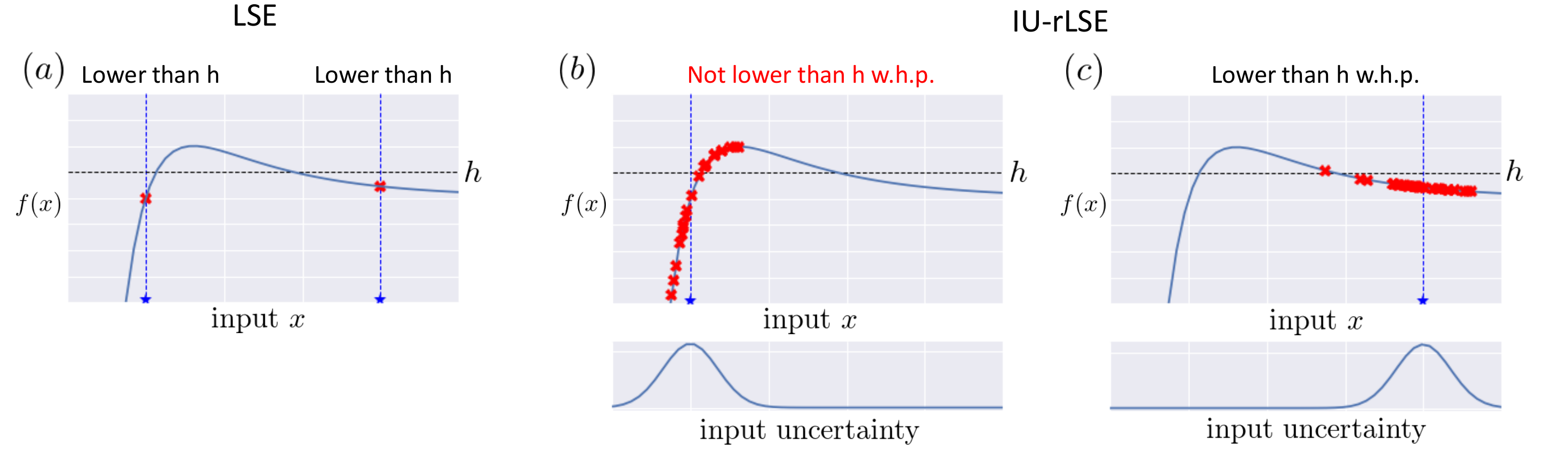}
        \caption{
     An illustrative example of IU-rLSE problem.
     ({\bf a}) An example of ordinary LSE problem. The two input points (blue stars) are considered as appropriate input points because the corresponding outputs are smaller than the desired threshold $h$.
     ({\bf b}) and ({\bf c}) Examples of IU-rLSE problem.
     In IU-rLSE problems, when a user specifies input points as indicated by bule stars, due to the input uncertainty, actual inputs are variated and hence the observed outputs are also variated as indicated by red crosses.
     In ({\bf b}), the probability of observing outputs smaller than the threshold $h$ (66\%) is not sufficiently high, and thus the input point (blue star) is not considered as an appropriate input point when the variability is taken into consideration.
     On the other hand, in ({\bf c}), the probability of observing outputs smaller than the threshold $h$ (97\%) is sufficiently high, and thus the input point (blue star) is considered as an appropriate input point even when the variability is taken into consideration.
     }
     \label{fig:setting_image}
    \end{center}
\end{figure*}

\paragraph{Related Work}
Machine learning problems for black-box functions with high evaluation cost have been studied in the context of \emph{active learning (AL)}~\cite{settles2009active}.
The problem of finding the global optimal solution for black-box functions is called Bayesian Optimization (BO)~\cite{shahriari2016taking}.
In BO and related AL problems, Gaussian Process (GP) model is often used as a nonparametric and flexible model of black box functions.
GP model was first used for LSE problem in \cite{bryan2006active}, where the authors proposed an AF based on \emph{Straddle heuristic}.
Then, \cite{gotovos2013active} proposed a new AF based on GP-UCB~\cite{srinivas2009gaussian} framework, and prove the convergence of the algorithm.
Recently, \cite{zanette2018robust} proposed another new AF for LSE problem based on expected improvement of classification accuracy.
LSE problems are also used in the context of safe BO~\cite{sui2015safe,DBLP:conf/icml/SuiZBY18}.
Furthermore, \cite{bogunovic2016truncated} introduced a unified framework of BO and LSE problems.
In order to obtain the predictive distribution of GP model under input uncertainty, integral calculations of the GP model over the input distribution is necessary.
%
Integral calculation on GP models have been studied in various contexts~\cite{girard2003gaussian,o1991bayes,xi2018bayesian,gessner2019active}.
In the context of AL such as BO, there are some studies dealing with input uncertainty~\cite{beland2017bayesian,oliveira2019bayesian,inatsu2019active}, but none of them consider the same problem setup as ours.
\paragraph{Contribution}
Our main contributions in this paper are as follows:
\begin{itemize}
 \item
      Assuming GP model as a prior distribution of the true function
      $f$,
      we formulate IU-rLSE problem,
      i.e.,
      the problem of identifying the set of input points
      at which
      the probability of observing a response smaller/greater than a certain threshold
      is sufficiently high
      under input uncertainty.

 \item
      We propose an AL method for IU-rLSE problems.
      Specifically,
      we propose a
      novel AF which can be interpreted as
      an expected improvement for the IU-rLSE problem.
      Although naive implementation of this AF
      requires huge computational cost,
      we propose a computational trick to reasonably approximate the the expected improvement.

 \item
      We show the advantage of the proposed IU-rLSE method both theoretically and empirically.
      Under reasonable assumptions,
      we analyze the accuracy and the convergence of the IU-rLSE method,
      and show that
      it has desirable properties.
      Furthermore,
      we demonstrate the effectiveness of the IU-rLSE method
      by performing numerical experiments both on synthetic and real data.
\end{itemize}

\section{Preliminaries}
Let $f: \D \rightarrow \R$ be a black-box function whose function values are expensive to evaluate,
where
$\D$
is a compact subset of $\R^d$.
For each input $\bm{x} \in \D$,
assume that a function value is observed as
$y= f({\bm{x}}) + \varrho$,
where
$\varrho \sim \mathcal{N}(0, \sigma^2)$
is an independent Gaussian noise.
Let
$\mathcal{X}$
be a set of finite points in
$\D$.
Given a threshold
$h \in \R$,
the goal of
ordinary
\emph{level set estimation (LSE)} problem~\cite{gotovos2013active}
is to identify
the set of points
$\bm x \in \mathcal{X}$
such that
$f(\bm x) \le h$.

In this paper,
we consider LSE problems under input uncertainty,
which we call
\emph{Input Uncertain Reliable LSE: IU-rLSE}.
In IU-rLSE problems,
when one aims to evaluate the function
$f$
at an input point
$\bm x \in \mathcal{X}$,
one cannot actually observe
$f(\bm x)$,
but observe the function value
$f(\bm s)$
for slightly different input point
$\bm s \in \D$
where
$\bm s$
is a realization of a random variable
$\bm S(\bm x)$
whose density function is written as
$g(\bm s \mid \bm \theta_{\bm x})$.
We first assume that the density function
$g(\bm s \mid \cdot)$
and
the parameters
$\bm \theta_{\bm x}$
are both known,
but later consider the case
where
$\bm \theta_{\bm x}$
is unknown.
The goal of IU-rLSE problems is to
identify a set of points
$\bm x \in \mathcal{X}$
such that
the probability
$
{\mathbb{P}}_{\bm s \sim g(\bm s \mid \bm \theta_{\bm x})}(f(\bm s) \le h)
$
is sufficiently high.
Specifically,
for
each
$\bm x \in \mathcal{X}$
the above probability is written as
\begin{eqnarray*}
 p_{\bm{x}}^* = \int_{f(\bm{s}) \le h} g(\bm{s} \mid \bm{\theta}_{\bm{x}})d\bm{s} = \int_\D \1[f(\bm{s}) < h]g(\bm{s} \mid \bm{\theta}_{\bm{x}})d\bm{s}.
\label{eq:p_star}
\end{eqnarray*}
For a given probability threshold
$\alpha \in (0, 1)$,
we define an upper set $\mathcal{H}$ and a lower set $\mathcal{L}$ on  a subset $\mathcal{X}$ of $\D$ as
\begin{align*}
\mathcal{H}=\{\bm{x} \in \mathcal{X} \mid p_{\bm{x}}^* > \alpha\},\ \mathcal{L}=\{\bm{x} \in \mathcal{X} \mid p_{\bm{x}}^* \leq \alpha\}.
\end{align*}
The goal of IU-rLSE problem is to identify $\mathcal{H}$ with as few function evaluations as possible.
Figure \ref{fig:input_dist} illustrate the basic idea of \emph{reliable} input region.

\begin{figure}[t]
    \begin{center}
        \includegraphics[scale=0.21]{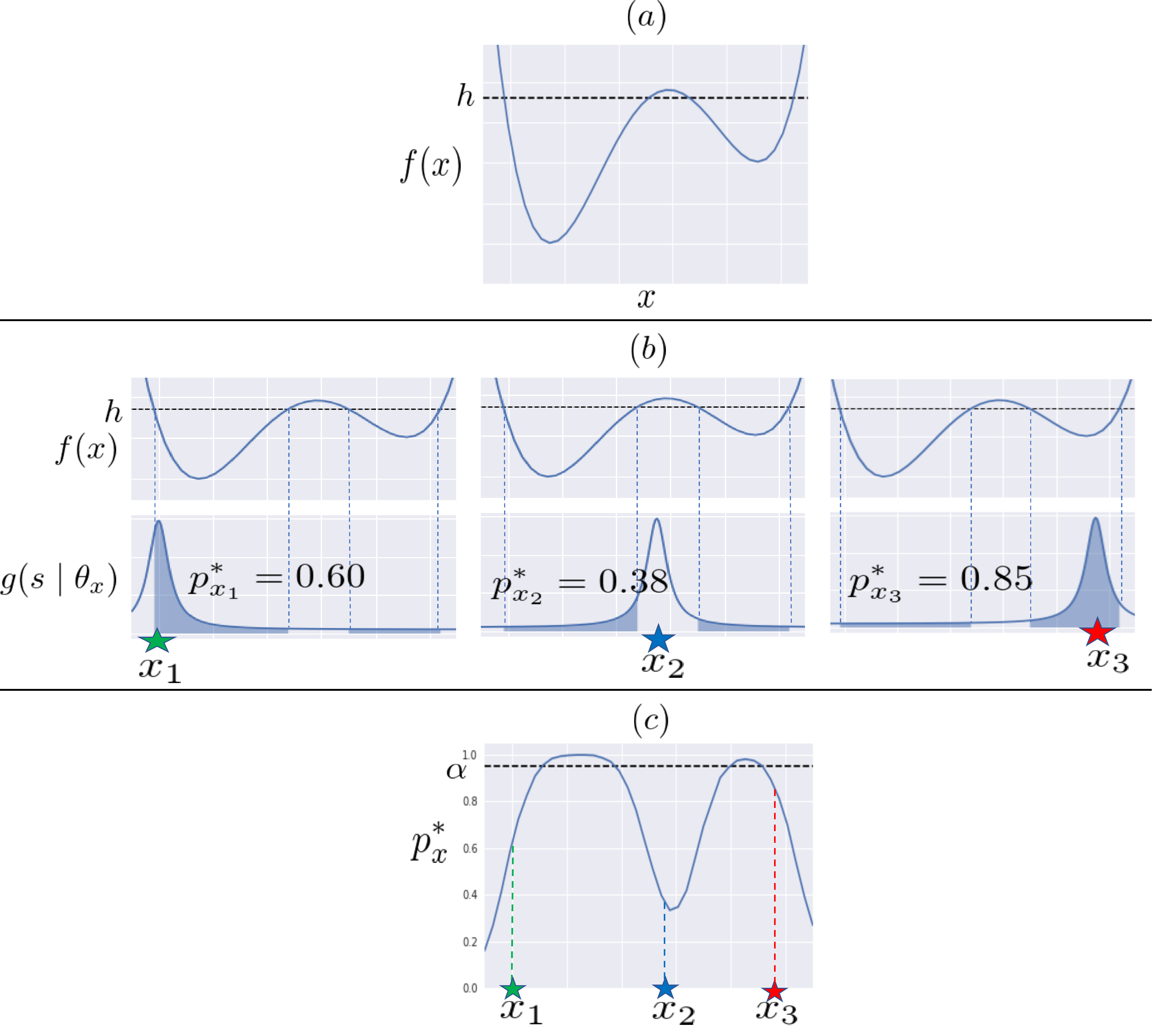}
    \caption{
     An illustrative example of \emph{reliable} input region.
     {\bf a}) The oracle black-box function.
     {\bf b}) Three examples of input points and their uncertainties.
     At each input point, the reliability $p^*_{x_\cdot}$ is defined as the probability of observing outputs smaller than the threshold $h$ when the input uncertainty is taken into account.
     {\bf c}) The reliable input region with reliability threshold $\alpha$ is defined as the subset of the input region in which the reliability $p^*_{x}$ is greater than $\alpha$ (e.g., $\alpha = 0.95$).
     The goal of IU-rLSE problem is to identify the reliable input region as few function evaluations as possible.
     }
    \label{fig:input_dist}
    \end{center}
\end{figure}

%

\subsection{Gaussian Process}
In this paper, to model the unknown function $ f $, we assume  Gaussian process (GP):$\mathcal{GP}(0,k(\bm{s},\bm{s}^{\prime}))$ as a prior distribution of $f$, where $k(\bm{s},\bm{s}^{\prime}):\D \times \D \rightarrow \R$ is a positive definite kernel.
Thus, for any finite points
$\bm{s}_1,\ \dots,\ \bm{s}_t$,
a joint distribution of
   its function values $f_t(\bm{s}_1),...,f_t(\bm{s}_t)$ is defined as
 $(f_t(\bm{s}_1),\ldots,f_t(\bm{s}_t))^{\top} \sim \mathcal{N}_t(\bm{\mu}_t ,\bm{K}_t)$,
where
  $\mathcal{N}_t (\bm{\mu}_t ,\bm{K}_t)$
is a $t$-dimensional normal distribution with mean vector $\bm{\mu}_t = (0,\ldots,0)^{\top} \equiv\bm{0}_t$ and covariance matrix $\bm{K}_t$ whose $(i,j)$th element is $k(\bm{s}_i,\bm{s}_j)$.
From properties of GP, the posterior distribution of $f$ after adding the current data  $\{(\bm{s}_j(\bm{x}_j),\ y_j\}_{j=1}^t$ is also  GP.
 Then, a  mean, variance and covariance of the posterior are respectively given by
\begin{eqnarray*}
\mu_t(\bm{x}) &=& k_t(\bm{x})^{\top}\bm{C}_t^{-1}\bm{y}_t, \\
\sigma_{t}^{2}(\boldsymbol{x}) &=& k_{t}(\boldsymbol{x}, \boldsymbol{x}), \\
k_{t}\left(\boldsymbol{x}, \boldsymbol{x}^{\prime}\right) &=& k\left(\boldsymbol{x}, \boldsymbol{x}^{\prime}\right)-\boldsymbol{k}_{t}(\boldsymbol{x})^{\top} \boldsymbol{C}_{t}^{-1} \boldsymbol{k}_{t}\left(\boldsymbol{x}^{\prime}\right),
\end{eqnarray*}
where $\boldsymbol{k}_{t}(\boldsymbol{x})=(k\left(\boldsymbol{s}_{1}\left(\boldsymbol{x}_{1}\right), \boldsymbol{x}),\ldots, k\left(\boldsymbol{s}_{t}\left(\boldsymbol{x}_{t}\right), \boldsymbol{x}\right)\right)^{\top},\ \boldsymbol{C}_{t}=\left(\boldsymbol{K}_{t}+\sigma^{2} \boldsymbol{I}_{t}\right),\ \boldsymbol{y}_{t}=\left(y_{1}, \ldots, y_{t}\right)^{\top}$ and
$\bm{I}_t$ is a $t$-dimensional identity matrix.

\section{Proposed Method}
In this section, we propose an efficient active learning method for IU-rLSE.
First of all, we explain the difference between ordinary LSE and  IU-rLSE.
Figure \ref{fig:esth_image} shows a conceptual diagram comparing LSE and IU-rLSE.
In LSE, the purpose is to classify values of the function $ f $.
On the other hand, the purpose of IU-rLSE is to classify probabilities that $ f $ falls below the threshold $h$ under input uncertainty.
%
In ordinary LSE, $ f $ is modeled by GP and classified using a credible interval of $ f ({\bm{x}}) $ \cite{bryan2006active,gotovos2013active}.
On the other hand, the classification target in our setting is the probability $ p_x ^ * $,
 so it is inappropriate to assume GP as in previous studies.
%
Furthermore, acquisition functions such as Straddle \cite{bryan2006active}, LSE \cite{gotovos2013active} and MILE \cite{zanette2018robust} proposed in previous studies can not be used directly in our setting.
In the following subsections, we propose a modeling method for $ p_x ^ * $ and an efficient acquisition function.
\subsection{Estimation of $\mathcal{H}$ and IU-rLSE}
In this subsection, we propose an estimation method of $\mathcal{H}$.
The basic idea is to construct a credible interval $ Q_t (\bm{x}) $ for $ p_x ^ * $ and perform classification based on it.

First, we assume GP as the prior distribution of $f$.
Then, for each $\bm{x} \in \mathcal{X}$, we define the random variable $p_{t, \bm{x}}$ which takes a value in the interval $[0,1]$ as
\begin{align*}
p_{t, \bm{x}} = \int_{\D}\1[f_t(\bm{s}) < h]g(\bm{s} \mid \bm{\theta}_{\bm{x}})d\bm{s}.
\end{align*}
Next, for any $\beta$ with  $\beta^{\frac{1}{2}} \geq 0$, we define the credible interval $Q_t(\bm{x}) = [l_t^{(p)},\ u_t^{(p)}]$ of  $p_{\bm{x}}^*$ as
\begin{align*}
Q_t(\bm{x}) &= [\mu_t^{(p)}(\bm{x}) - \beta^{\frac{1}{2}}\gamma_t(\bm{x}),\ \mu_t^{(p)}(\bm{x}) + \beta^{\frac{1}{2}}\gamma_t(\bm{x})] \\
            &\equiv [l_t^{(p)},\ u_t^{(p)}],
\end{align*}
where $\mu_t^{(p)}(\bm{x}) $ and $\gamma_t^2(\bm{x})$ are given by
\begin{align}
\label{eq:def-mutp}
\mu_t^{(p)}(\bm{x}) &= \E[p_{t, \bm{x}}] = \int_{\D} \Phi_{\bm{s}}g(\bm{s} \mid \bm{\theta}_{\bm{x}}) d\bm{s} \\
\label{eq:def-gammat2}
\gamma_t^2(\bm{x}) &=
 \int_{\D} \V[\1[f_t(\bm{s}) < h]]g(\bm{s} \mid \bm{\theta}_{\bm{x}}) d\bm{s}  \\
&= \int_{\D} \Phi_{\bm{s}}\left(1 - \Phi_{\bm{s}}\right)g(\bm{s} \mid \bm{\theta}_{\bm{x}}) d\bm{s}, \nonumber
\end{align}
and we use the notation $\Phi_{\bm{s}} = \Phi\left(\frac{h - \mu_t{(\bm{s})}}{\sigma_t(\bm{s})}\right)$.
 Here, $\Phi (\cdot)$ is the cumulative distribution function of standard normal distribution.
By using the interval $Q_t(\bm{x})$, we define respectively estimated sets $\mathcal{H}_t$ and $\mathcal{L}_t$ of $\mathcal{H} $ and  $\mathcal{L}$ at the $t$th trial as
\begin{align}
\label{eq:est_high_set}
\mathcal{H}_t &= \{\bm{x} \in \mathcal{X}\ |\ l_t^{(p)} > \alpha - \epsilon\}, \\
\label{eq:est_low_set}
\mathcal{L}_t &= \{\bm{x} \in \mathcal{X}\ |\ u_t^{(p)} \leq \alpha + \epsilon\}.
\end{align}
Moreover, we define the unclassified set $\mathcal{U}_t = \mathcal{X} \backslash (\mathcal{H}_t \cup \mathcal{L}_t)$.

Then, for the credible interval $Q_t(\bm{x})$, the following lemma holds (the proof is given in Appendix \ref{lem:int_process},  \ref{lem:chebyshev}):
\begin{lemma}\label{lem:chebyshev}
    Let $\delta \in (0, 1)$.
    Then, with probability at least $1 - \delta$, it holds that
    \begin{eqnarray*}
        |p_{t, \bm{x}} - \mu_t^{(p)}(\bm{x})| < \delta^{-\frac{1}{2}} \gamma_t(\bm{x})
    \end{eqnarray*}
   where $ \mu_t^{(p)}(\bm{x})$ and $\gamma^2_t(\bm{x})$ are given by \eqref{eq:def-mutp} and \eqref{eq:def-gammat2}, respectively.
\end{lemma}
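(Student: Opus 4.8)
The statement is a Chebyshev-type concentration inequality for the random variable $p_{t,\bm{x}}$, and the plan is to apply Chebyshev's inequality after establishing that $\gamma_t^2(\bm{x})$ is an \emph{upper bound} on the true variance $\V[p_{t,\bm{x}}]$ (it is not, in general, equal to it). First I would establish that $p_{t,\bm{x}}$ is a well-defined, square-integrable random variable and compute its first moment; this is the role of the auxiliary integral-process lemma. For each fixed $\bm{s}$, the posterior law of $f_t(\bm{s})$ is Gaussian with mean $\mu_t(\bm{s})$ and variance $\sigma_t^2(\bm{s})$, so the indicator $\1[f_t(\bm{s})<h]$ is Bernoulli with success probability $\Phi_{\bm{s}}$. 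Since this indicator is bounded and jointly measurable in $(\bm{s},\omega)$, Fubini--Tonelli lets me interchange expectation and the integral against $g(\bm{s}\mid\bm{\theta}_{\bm{x}})$, giving $\E[p_{t,\bm{x}}] = \mu_t^{(p)}(\bm{x})$ and the pointwise variance $\V[\1[f_t(\bm{s})<h]] = \Phi_{\bm{s}}(1-\Phi_{\bm{s}})$.

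The crux is bounding $\V[p_{t,\bm{x}}]$. Writing the variance of the integral as a double integral of covariances,
\begin{align*}
\V[p_{t,\bm{x}}] = \int_\D\!\!\int_\D \Cov\bigl(\1[f_t(\bm{s})<h],\1[f_t(\bm{s}')<h]\bigr)\, g(\bm{s}\mid\bm{\theta}_{\bm{x}}) g(\bm{s}'\mid\bm{\theta}_{\bm{x}})\,d\bm{s}\,d\bm{s}',
\end{align*}
I would apply the Cauchy--Schwarz inequality for covariances, $\Cov(X,Y)\le\sqrt{\V[X]\V[Y]}$, to each integrand. This factorizes the double integral into a perfect square,
\begin{align*}
\V[p_{t,\bm{x}}] \le \Bigl(\int_\D \sqrt{\Phi_{\bm{s}}(1-\Phi_{\bm{s}})}\,g(\bm{s}\mid\bm{\theta}_{\bm{x}})\,d\bm{s}\Bigr)^2.
\end{align*}
Then Jensen's inequality for the convex map $u\mapsto u^2$ (equivalently, non-negativity of the variance of $\bm{s}\mapsto\sqrt{\Phi_{\bm{s}}(1-\Phi_{\bm{s}})}$ under the density $g$) gives
\begin{align*}
\Bigl(\int_\D \sqrt{\Phi_{\bm{s}}(1-\Phi_{\bm{s}})}\,g(\bm{s}\mid\bm{\theta}_{\bm{x}})\,d\bm{s}\Bigr)^2 \le \int_\D \Phi_{\bm{s}}(1-\Phi_{\bm{s}})\,g(\bm{s}\mid\bm{\theta}_{\bm{x}})\,d\bm{s} = \gamma_t^2(\bm{x}),
\end{align*}
so that $\V[p_{t,\bm{x}}]\le\gamma_t^2(\bm{x})$.

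Finally, Chebyshev's inequality yields $\PR(|p_{t,\bm{x}}-\mu_t^{(p)}(\bm{x})|\ge a)\le \V[p_{t,\bm{x}}]/a^2 \le \gamma_t^2(\bm{x})/a^2$; choosing $a=\delta^{-1/2}\gamma_t(\bm{x})$ makes the right-hand side equal to $\delta$, which is exactly the claimed bound (with the strict inequality coming from passing to the complementary event). The main obstacle is the variance bound: the indicators at distinct $\bm{s}$ are correlated through the GP, so $\gamma_t^2(\bm{x})$ is not the exact variance, and the Cauchy--Schwarz-plus-Jensen argument is precisely what converts the genuine, intractable double-integral variance into the tractable single-integral quantity $\gamma_t^2(\bm{x})$. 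A secondary technical point is justifying the Fubini interchange and the measurability of the sample-path integral, which I expect the auxiliary integral-process lemma to supply.
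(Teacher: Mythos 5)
Your proof is correct and takes essentially the same route as the paper: establish well-definedness of $p_{t,\bm{x}}$ (the paper's auxiliary Lemma \ref{lem:int_process}), apply Chebyshev's inequality, and dominate $\V[p_{t,\bm{x}}]$ by $\gamma_t^2(\bm{x})$ through a pointwise bound on the covariance of the indicators. The only immaterial difference is that covariance bound---you use Cauchy--Schwarz, $\Cov(X,Y)\le\sqrt{\V[X]\V[Y]}$, followed by Jensen, whereas the paper uses $\Cov[X,Y]\le(\V[X]+\V[Y])/2$ together with symmetry of the double integral; both yield $\V[p_{t,\bm{x}}]\le\gamma_t^2(\bm{x})$, and your choice $a=\delta^{-1/2}\gamma_t(\bm{x})$ in Chebyshev is the correct one (the paper's ``$\epsilon=\delta^{1/2}\gamma_t(\bm{x})$'' is a typo).
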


\begin{figure*}[t]
    \includegraphics[scale=0.35]{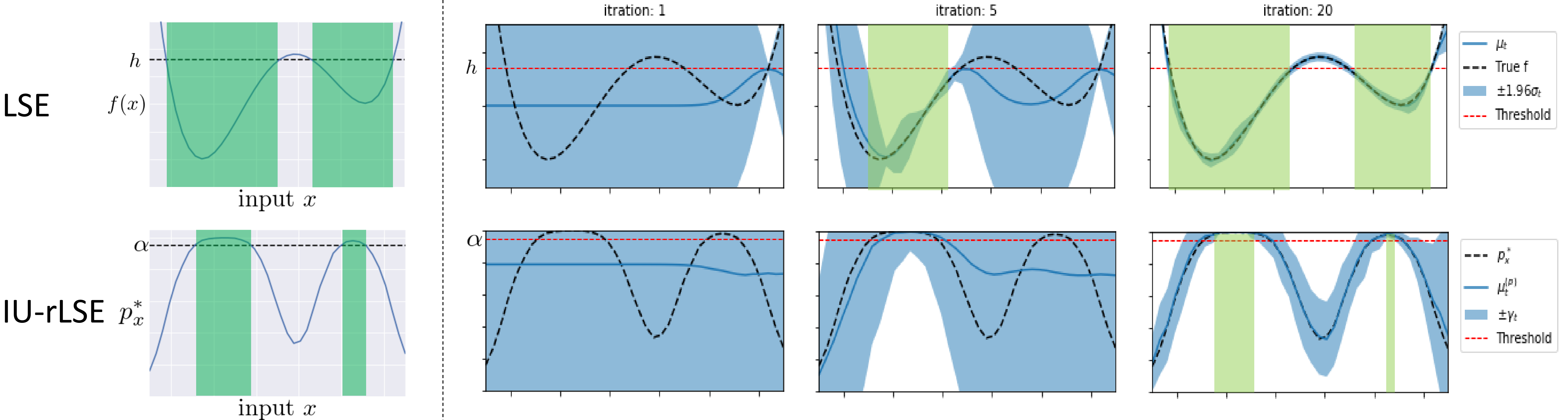}
    \caption{Comparison of LSE and IU-rLSE procedures.
LSE identifies points where the function $ f $ is below the threshold $h$, but IU-rLSE identifies points where the probability $ p_x ^ * $ introduced by input uncertainty is above the threshold $ \alpha $.
As a result, classified points (green area)  by IU-rLSE  differ from ordinary LSE due to input uncertainty.
Moreover, from figures on the right in the upper row, in ordinary LSE, $ f $ is modeled by GP, and classification is performed based on credible  intervals of $ f $.
  On the other hand, in IU-rLSE, it is necessary to construct  credible intervals of $ p_x ^ * $ appropriately.}
    \label{fig:esth_image}
\end{figure*}

\subsection{Acquisition function}
In this subsection, we propose an acquisition function to determine a next evaluation point.
%
Our proposed AF is based on the Maximum Improvement for Level-set Estimation  (MILE) introduced by \cite{zanette2018robust}.
In MILE, the point that maximizes the expected   classification improvement after adding one point is taken as the next evaluation point.
However, MILE can not be directly applied under input uncertainty.
Therefore, we extend MILE to the setting in this paper, and propose  rational approximations.
In addition, by combining the proposed AF with random sampling, we show that   our proposed algorithm converges with   probability 1.

\subsubsection{AF based on expected classification improvement and its approximation}
Let $\bm{s}^*$ be an entered point, and let $y^* = f(\bm{s}^*) + \varrho$ be an observed value corresponding to $\bm{s}^*$.
Moreover, let $\mathcal{H}_t(\bm{s}^*,\ y^*)$ denote an estimated set of $\mathcal{H}$  when $(\bm{s}^*, y^*)$ is added.
 Then, the expected classification improvement $a_t(\bm{x})$ when considering input uncertainty for the point $\bm{x} \in \mathcal{X}$ is given by
\begin{equation}
a_{t}\left(\boldsymbol{x}\right)=\int_{\D}\mathrm{E}_{y^{*}}\left[\left|\mathcal{H}_{t}\left(\boldsymbol{s}^{*}, y^{*}\right)\right|-|\mathcal{H}_{t}| \right] g\left(\boldsymbol{s}^{*} | \theta_{\boldsymbol{x}}\right) d \boldsymbol{s}^{*},
\label{eq:org_aq}
\end{equation}
where the expected value in
(\ref{eq:org_aq}) can be expressed as
\begin{align}
&\quad\mathrm{E}_{y^{*}}\left[\left|\mathcal{H}_{t}\left(\boldsymbol{s}^{*}, y^{*}\right)\right|-|\mathcal{H}_{t}| \right] \nonumber \\
&= \sum_{\bm{x} \in \mathcal{X}}\int \1_{\bm{x}\mid \bm{s}^*, y^*}p(y^* \mid \bm{s}^*) dy^* - |H_{t}|.
\label{eq:aq_exp}
\end{align}
We denotes indicator function $\1[\mu_t^{(p)}(\bm{x}\mid\bm{s}^*, y^*) - \beta^{\frac{1}{2}}\gamma_t^{(p)}(\bm{x}\mid\bm{s}^*, y^*) > \alpha - \epsilon]$
as $\quad\1_{\bm{x}\mid \bm{s}^*, y^*}$.
Here,
  $\mu_t^{(p)}(\bm{x} \mid \bm{s}^*, y^*)$ and $ \gamma_t^2(\bm{x} \mid \bm{s}^*, y^*)$ are given by
\begin{align}
    \label{eq:next_mu}
    \mu_t^{(p)}(\bm{x} \mid \bm{s}^*, y^*)&=\int_{\D} \Phi_{\bm{s}\mid y^*}g(\bm{s} \mid \bm{\theta}_{\bm{x}}) d\bm{s}, \\
    \label{eq:next_gamma}
    \gamma_t^{2}(\bm{x} \mid \bm{s}^*, y^*)&= \nonumber \\
       \int_{\D} \Phi_{\bm{s}\mid y^*}&\left(1 - \Phi_{\bm{s}\mid y^*}\right)g(\bm{s} \mid \bm{\theta}_{\bm{x}}) d\bm{s}
\end{align}
where $p(y^* \mid \bm{s}^*)$ is a density function of  $y^*$  corresponding to $\bm{s}^*$,
and we use the notation $\Phi_{\bm{s}\mid y^*} =\Phi\left(\frac{h - \mu_t{(\bm{s}\mid \bm{s}^*, y^*)}}{\sigma_t(\bm{s} \mid \bm{s}^*)}\right)$.
Furthermore, $\mu_t(\bm{x} \mid \bm{s}^*, y^*)$ and $ \sigma_t^2(\bm{s}\mid\bm{s}^*)$ are a posterior mean and variance of
$f({\bm{s}})$ after adding  $(\bm{s}^*, y^*)$.

Next, we consider the calculation cost of $a_t ({\bm{x}})$.
From \eqref{eq:org_aq}--\eqref{eq:next_gamma}, in order to calculate $a_t ({\bm{x}})$, it is necessary to perform integration three times.
When one integral calculation is approximated by $ M $ times sampling, the calculation cost of $ a_t (\bm {x}) $ is $ O (| \mathcal {X} | M ^ 3) $.
However, since this is not a realistic cost, we propose a reasonable approximation of $ a_t (\bm {x}) $.
%
For this reason, we approximate  (\ref{eq:next_mu}) and (\ref{eq:next_gamma}) as
\begin{align*}
    \mu_t^{(p)}(\bm{x} \mid \bm{s}^*, y^*) &\approx \Phi_{\overline{\bm{s}}}, \\
    \gamma_t^2(\bm{x} \mid \bm{s}^*, y^*) &\approx \Phi_{\overline{\bm{s}}}\left(1 - \Phi_{\overline{\bm{s}}}\right),
\end{align*}
where $\overline{\bm{s}}$ is the expected value of ${\bm{s}}$ with respect to $g(\bm{s} \mid \bm{\theta}_{\bm{x}})$,
and we use the notation $\Phi_{\overline{\bm{s}}} = \Phi\left(\frac{h - \mu_t{(\overline{\bm{s}}\mid\bm{s}^*, y^*)}}{\sigma_t(\overline{\bm{s}}\mid\bm{s}^*)}\right)$.
%
Hence,    (\ref{eq:aq_exp}) can be approximated as
\begin{align}
    &\quad \mathrm{E}_{y^{*}}\left[\left|\mathcal{H}_{t}\left(\boldsymbol{s}^{*}, y^{*}\right)\right|-|\mathcal{H}_{t}| \right] \nonumber \\
    &= \sum_{\bm{x} \in \mathcal{X}}\int \1_{\bm{x}\mid \bm{s}^*, y^*}\ p(y^* \mid \bm{s}^*) dy^* - |\mathcal{H}_{t}| \nonumber \\
    &\approx\sum_{\overline{\bm{s}} \in \overline{\mathcal{S}}}\int \1_{\bm{x}\mid \bm{s}^*, y^*}\ p(y^* \mid \bm{s}^*) dy^* - |\mathcal{H}_{t}|,
    \label{eq:approx_exp}
\end{align}
where $\overline{\mathcal{S}} = \{\E_{g(\bm{s} \mid \bm{\theta}_{\bm{x}})}[\bm{s} \mid \bm{x}] \mid \bm{x} \in \mathcal{X}\}$.
Moreover, the inequality in the indicator function in (\ref{eq:approx_exp}) can be written as follows (details are given in
Appendix\ref{lem:second_inequity}:
\begin{equation*}
 c < \Phi_{\overline{\bm{s}}} \leq 1,\
\end{equation*}
\begin{equation*}
    c = \frac{2(\alpha - \epsilon) + \beta + \sqrt{\beta^2 + 4(\alpha - \epsilon)\beta - 4(\alpha- \epsilon)^2\beta}}{2(1 + \beta)}.\
\end{equation*}
Therefore, the following holds:
\begin{align*}
     &\hspace{10pt}\frac{h - \mu_t{(\overline{\bm{s}}\mid\bm{s}^*, y^*)}}{\sigma_t(\overline{\bm{s}}\mid\bm{s}^*)} < \Phi^{-1}(c) \\
     &\Leftrightarrow \mu_t{(\overline{\bm{s}}\mid\bm{s}^*, y^*)} > h - \sigma_t(\overline{\bm{s}}\mid\bm{s}^*)\Phi^{-1}(c).
\end{align*}
Moreover,
 the posterior mean $\mu_t{(\overline{\bm{s}}\mid\bm{s}^*, y^*)}$ can be written as follows (see, e.g., \cite{gpml}):
\begin{equation*}
\mu_t{(\overline{\bm{s}}\mid\bm{s}^*, y^*)} = \mu_t(\overline{\bm{s}}) - \frac{k_t(\overline{\bm{s}},\ \bm{s}^*)}{\sigma_t^2(\bm{s}^*) + \sigma^2}(y^* - \mu_t(\bm{s}^*)).
\end{equation*}
Thus, noting that  $\mu_t{(\overline{\bm{s}}\mid\bm{s}^*, y^*)}$ can be expressed as the linear function of $y^*$,
the inequality in the indicator function in (\ref{eq:approx_exp}) can be also written as the linear function of  $y^*$.
Hence, by using the cdf of standard normal distribution,
the integral in (\ref{eq:approx_exp}) can be solved analytically because
 $p(y^* \mid \bm{s}^*)$ is a density function of normal distribution  (details are given in Appendix \ref{lem:drive_aq}.

From the above discussion, we propose the following approximate AF $\hat{a}_t(\bm{x})$:
\begin{align}
&\hat{a}_t(\bm{x}) = \int_{\D}\Biggl\{\sum_{\overline{\bm{s}} \in \overline{\mathcal{S}}} \Phi\Biggl(\frac{\sqrt{\sigma_{t}^{2}\left(\bm{s}^{*}\right)+\sigma^{2}}}{\left|k_{t}\left(\overline{\bm{s}},
\boldsymbol{s}^{*}\right)\right|} (\mu_{t}(\overline{\bm{s}}) \nonumber \\
&-\Phi^{-1}(c) \sigma_{t}\left(\overline{\bm{s}} | \boldsymbol{s}^{*}\right)-h)\Biggr)
  - |\mathcal{H}_t| \Biggr\} g(\bm{s}^* \mid \bm{\theta_{\bm{x}}}) d\bm{s}^*,
\label{eq:proposed_aq}
\end{align}
where
\begin{equation*}
\overline{\mathcal{S}}=\{\E_{g(\bm{s} \mid \bm{\theta}_{\bm{x}})}[\bm{s} \mid \bm{x}] \mid \bm{x} \in \mathcal{X}\}.
\end{equation*}
Since \eqref{eq:proposed_aq} has only one integral, the calculation cost of  \eqref{eq:proposed_aq} is $O(|\mathcal{X}|M)$.
However, approximation accuracy of $\hat{a}_t(\bm{x})$ is not necessary good because
$\hat{a}_t(\bm{x})$ considers only the classification of $\mathcal{\overline{S}}$.
As the IU-rLSE progresses and posterior variances of $f$ corresponding to points in  $ \mathcal {\overline{S}} $ is reduced sufficiently,  all points in  $ \mathcal {\overline {S}} $ are classified. As a result, it is expected that $\hat{a}_t(\bm{x})$ will not work well after this.
To avoid this problem, we consider adaptively determining  $ \mathcal {\overline {S}} $ for each trial.
For each trial $t$, we define $ \mathcal {\overline {S}} _t$ as
\begin{align}
\mathcal {\overline {S}} _t = \left \{ \tilde{\bm{s}} _{ {\bm{x}}} \equiv
\argmax_{\bm{s} \in \D} \Phi_{\bm{s}\mid y^*}\left(1 - \Phi_{\bm{s}\mid y^*}\right)g(\bm{s} \mid \bm{\theta}_{\bm{x}}) \right. \nonumber \\
\Biggl | \  {\bm{x}} \in \mathcal{X}
\Biggr \}. \label{eq:ada_strategy}
\end{align}
Note that $ \tilde{\bm{s}} _{ {\bm{x}}}$ is the  point which maximizes the integrand in   $\gamma_t^2(\bm{x} \mid \bm{s}^*, y^*)$.
%
The pseudo code of our proposed method is shown in Algorithm \ref{alg:proposed}.
In the proposed method, for each trial $t$, with probability $1-p_t$, we select $ \bm {x} \in \mathcal {X} $ based on $\hat{a}(\bm{x})$, and otherwise uniformly select $ \bm {x} \in \mathcal {X} $.
Here, $\mathcal{B}(p_t)$ in Algorithm \ref{alg:proposed} is Bernoulli distribution with parameter $p_t$.
%
\begin{algorithm}[t]
    \caption{Proposed LSE}
    \label{alg:proposed}
    \begin{algorithmic}
        \REQUIRE Initial training data, GP prior $\mathcal{GP}(0,\ k(\bm{x}, \bm{x}^{\prime}))$,\ probabilities $\{p_t\}_{t \in \N}$
        \ENSURE Estimated sets $\hat{\mathcal{H}}$, $\hat{\mathcal{L}}$
        \STATE $\hat{\mathcal{H}}_0 \leftarrow \emptyset,\ \hat{\mathcal{L}}_0 \leftarrow \emptyset,\ \hat{\mathcal{U}}_0 \leftarrow \mathcal{X}$
        \STATE $t \leftarrow 1$
        \WHILE{$\hat{\mathcal{U}}_{t-1} \neq \emptyset$}
            \STATE $\hat{\mathcal{H}}_t \leftarrow \hat{\mathcal{H}}_{t-1},\ \hat{\mathcal{L}}_t \leftarrow \hat{\mathcal{L}}_{t-1},\ \hat{\mathcal{U}}_t \leftarrow \hat{\mathcal{U}}_{t-1}$
            \FORALL{$\bm{x} \in \mathcal{X}$}
                \STATE Compute credible interval $Q_t(\bm{x})$ from GP
            \ENDFOR
            \STATE Compute $\mathcal{H}_t, \mathcal{L}_t$ and $\mathcal{U}_t$ from (\ref{eq:est_high_set}), (\ref{eq:est_low_set}) and generate $r_t$ from $\mathcal{B}(p_t)$
            \IF{$r_t = 0$}
                \STATE Compute $\mathcal{\overline{S}}_t$ from (\ref{eq:ada_strategy})
                \STATE $\bm{x}_t = \argmax_{\bm{x} \in \mathcal{X}} \hat{a_t}(\bm{x})$
            \ELSE
                \STATE Select $\bm{x}_t$ at random
            \ENDIF
            \STATE Generate $\bm{s}_t(\bm{x})$ from $\bm{S}(\bm{x}_t)$
            \STATE $y_t \leftarrow f(\bm{s}_t(\bm{x_t})) + \varepsilon_t$
            \STATE $t \leftarrow t + 1$
        \ENDWHILE
        \STATE $\hat{\mathcal{H}} \leftarrow \hat{\mathcal{H}}_{t-1}, \hat{\mathcal{L}} \leftarrow \hat{\mathcal{L}}_{t-1}$
    \end{algorithmic}
\end{algorithm}

\subsubsection{Unknown input distribution}
In this subsection, we consider the case that the density function $g(\bm{s} \mid \bm{\theta}_{\bm{x}})$ is unknown.
%
In this case, it is necessary to estimate it during trials.
One natural approach is to assume certain function  form for
$g(\bm{s} \mid \bm{\theta}_{\bm{x}})$ and estimate unknown parameters $\bm{\theta}_{\bm{x}}$.
 Nonetheless, parameter estimation is still difficult if we assume
  a different $\bm{\theta}_{\bm{x}}$ for each point $\bm{x} \in \mathcal{X}$.
For this reason, we assume that $\bm{\theta}_{\bm{x}} $ can be separated as
 $\bm{\theta}_{\bm{x}} = (\bm{\hat{\theta}}_{\bm{x}}, \bm{\xi})$, where
$\bm{\hat{\theta}}_{\bm{x}}$ and  $\bm{\xi}$ are respectively known and unknown parameters.
Then, assuming a prior distribution $\pi(\bm{\xi})$ for $\bm{\xi}$,
$g(\bm{s}\mid \bm{\theta}_{\bm{x}})$ can be estimated using a posterior distribution $\pi_t(\bm{\xi})$
 after data observation as follows:
\begin{equation}
g_t(\bm{s} \mid \bm{\theta}_{\bm{x}}) = \int g(\bm{s} \mid \bm{\theta}_{\bm{x}}) \pi_t(\bm{\xi}) d\bm{\xi}.
\label{eq:approx_g}
\end{equation}
Therefore, based on (\ref{eq:approx_g}), we can compute (\ref{eq:est_low_set}), (\ref{eq:est_low_set}),\ (\ref{eq:proposed_aq}), and (\ref{eq:ada_strategy}).

\section{Theoritical Result}\label{sec:theory}
In this section,
we present two theorems for accuracy and convergence.
First, for each point $\bm{x} \in \mathcal{X}$,
we define the misclassification loss   $e_\alpha ({\bm{x}})$ as
\begin{eqnarray*}
e_\alpha ({\bm{x}}) =
\left \{
\begin{array}{ll}
\max \{ 0,  p^\ast_{\bm{x}} -\alpha \}  & \text{if} \  \bm{x} \in \hat{L}  \\
\max \{ 0, \alpha- p^\ast_{\bm{x}}  \} & \text{if} \ \bm{x} \in \hat{H}
\end{array}
\right . .
\end{eqnarray*}
Then, the following theorem holds for classification accuracy:
\begin{theorem}\label{thm:seido}
For any $\alpha \in (0,1)$, $\delta \in (0,1)$ and $\epsilon >0$, if
$\beta^{1/2} = (\delta/|\mathcal{X} | ) ^{-1/2} $,
 with probability at least  $ 1- \delta $, the misclassification loss is less than $ \epsilon $ when the   algorithm is finished.
That is, the following inequality holds:
$$
\PR \left ( \max _{{\bm{x}} \in \mathcal{X}} e_\alpha ({\bm{x}}) \leq \epsilon \right ) \geq 1-\delta.
$$
\end{theorem}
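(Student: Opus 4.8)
The plan is to convert the single-point Chebyshev bound of Lemma~\ref{lem:chebyshev} into a confidence statement that holds simultaneously over all of $\mathcal{X}$ by a union bound, and then to show that, on the resulting good event, the classification rules \eqref{eq:est_high_set} and \eqref{eq:est_low_set} are structurally incapable of producing a loss larger than $\epsilon$. The whole argument reduces to the geometric fact that a point is put into $\hat{\mathcal{H}}$ (resp. $\hat{\mathcal{L}}$) only when its credible interval lies on the correct side of $\alpha-\epsilon$ (resp. $\alpha+\epsilon$), so if the interval also traps the true reliability $p^\ast_{\bm{x}}$ the misclassification can be at most $\epsilon$.

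First I would define the good event $\mathcal{E}=\{\, p^\ast_{\bm{x}} \in Q_t(\bm{x}) \text{ for all } \bm{x}\in\mathcal{X}\,\}$, evaluated at the trial at which the algorithm terminates. Applying Lemma~\ref{lem:chebyshev} with the per-point confidence level $\delta/|\mathcal{X}|$ gives, for each fixed $\bm{x}$, that $|p^\ast_{\bm{x}}-\mu_t^{(p)}(\bm{x})| \ge (\delta/|\mathcal{X}|)^{-1/2}\gamma_t(\bm{x})$ with probability at most $\delta/|\mathcal{X}|$. With the stated choice $\beta^{1/2}=(\delta/|\mathcal{X}|)^{-1/2}$, the complementary event is exactly $p^\ast_{\bm{x}}\in Q_t(\bm{x})$, so a union bound over the $|\mathcal{X}|$ points yields $\PR(\mathcal{E}) \ge 1-|\mathcal{X}|\cdot(\delta/|\mathcal{X}|)=1-\delta$. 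Here I would lean on the Bayesian fact that, conditional on the observed data, the true reliability $p^\ast_{\bm{x}}$ (computed from the true $f$) has the same law as the random variable $p_{t,\bm{x}}$ controlled by the lemma, so the deviation bound transfers from $p_{t,\bm{x}}$ to $p^\ast_{\bm{x}}$.

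Next, working on $\mathcal{E}$, I would bound $e_\alpha(\bm{x})$ by cases. If $\bm{x}\in\hat{\mathcal{H}}$, rule \eqref{eq:est_high_set} gives $l_t^{(p)}(\bm{x})>\alpha-\epsilon$; since $p^\ast_{\bm{x}}\ge l_t^{(p)}(\bm{x})$ on $\mathcal{E}$, we obtain $\alpha-p^\ast_{\bm{x}}<\epsilon$, hence $e_\alpha(\bm{x})=\max\{0,\alpha-p^\ast_{\bm{x}}\}<\epsilon$. Symmetrically, if $\bm{x}\in\hat{\mathcal{L}}$, rule \eqref{eq:est_low_set} gives $u_t^{(p)}(\bm{x})\le\alpha+\epsilon$, and $p^\ast_{\bm{x}}\le u_t^{(p)}(\bm{x})$ forces $p^\ast_{\bm{x}}-\alpha\le\epsilon$, hence $e_\alpha(\bm{x})=\max\{0,p^\ast_{\bm{x}}-\alpha\}\le\epsilon$. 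Taking the maximum over $\bm{x}\in\mathcal{X}$ shows $\max_{\bm{x}}e_\alpha(\bm{x})\le\epsilon$ on $\mathcal{E}$, so $\PR(\max_{\bm{x}}e_\alpha(\bm{x})\le\epsilon)\ge\PR(\mathcal{E})\ge1-\delta$.

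The step I expect to be the main obstacle is the temporal one: different points are classified at different, data-dependent trials, whereas the stated $\beta$ carries no union bound over $t$. I would resolve this by invoking the confidence statement at the termination trial, where the stopping condition $\mathcal{U}=\emptyset$ forces every $\bm{x}$ into $\hat{\mathcal{H}}\cup\hat{\mathcal{L}}$, so that a single application of the uniform-over-$\mathcal{X}$ bound covers all classifications at once. Some care is still needed to argue that conditioning on the random stopping time does not invalidate the Bayesian transfer of Lemma~\ref{lem:chebyshev} from $p_{t,\bm{x}}$ to $p^\ast_{\bm{x}}$; if a cleaner treatment is desired, one can instead enforce the bound for all $t$ simultaneously at the cost of an extra union factor, which the present choice of $\beta$ avoids by restricting attention to the terminal interval.
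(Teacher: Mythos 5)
Your proposal is correct and takes essentially the same route as the paper's proof: apply Lemma~\ref{lem:chebyshev} with per-point confidence $\delta/|\mathcal{X}|$, take a union bound over $\mathcal{X}$ at the terminal trial $T$ (where $\mathcal{U}_T=\emptyset$ guarantees every point is classified), and conclude via the classification rules \eqref{eq:est_high_set}--\eqref{eq:est_low_set} and the definition of $e_\alpha$. The only difference is one of explicitness: the paper compresses your case analysis and the Bayesian transfer from $p_{T,\bm{x}}$ to $p^\ast_{\bm{x}}$ into the single sentence ``by combining this result, the classification rule and the definition of $e_\alpha$,'' whereas you spell these steps out.
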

The proof is given in Appendix \ref{sec:appendix_seido}.

The next theorem states the convergence property of the proposed IU-rLSE method.
Unlike ordinary LSE problem,
the coverngence of IU-rLSE is non-trivial
since one cannot evaluate the function at desired input points.
Therefore,
we conduct careful probabilistic analysis on the convergence in the following theorem.
The following theorem gives a probabilistic evaluation for convergence of the algorithm under regular conditions {\sf (A1)--(A4)} (given in Appendix).
\begin{theorem}\label{thm:owaru}
Assume that regular conditions {\sf (A1)--(A4)} hold. Then, for any $ \alpha \in (0,1) $, $ \epsilon> 0 $ and $ \beta> 0 $, with probability 1, the algorithm ends after point evaluations for a finite number of times.
\end{theorem}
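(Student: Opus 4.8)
The plan is to reduce almost-sure finite termination to the almost-sure vanishing of the credible-interval half-widths $\beta^{1/2}\gamma_t(\bm{x})$, and then to obtain that vanishing from the randomized sampling step of Algorithm~\ref{alg:proposed}. First I would characterize when a point stays unclassified: by \eqref{eq:est_high_set}--\eqref{eq:est_low_set}, a point $\bm{x}$ lies in the unclassified set exactly when $l_t^{(p)}(\bm{x}) \le \alpha - \epsilon$ and $u_t^{(p)}(\bm{x}) > \alpha + \epsilon$. Subtracting these and using $u_t^{(p)} - l_t^{(p)} = 2\beta^{1/2}\gamma_t(\bm{x})$ gives $\beta^{1/2}\gamma_t(\bm{x}) > \epsilon$. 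The contrapositive is the key termination criterion: once $\gamma_t(\bm{x}) \le \epsilon\beta^{-1/2}$, the point $\bm{x}$ is classified and, by monotonicity of the accumulated classification, leaves $\hat{\mathcal{U}}_t$ permanently. Since $|\mathcal{X}| < \infty$, the loop halts as soon as $\gamma_t(\bm{x}) \le \epsilon\beta^{-1/2}$ holds simultaneously for all $\bm{x} \in \mathcal{X}$. So it suffices to show that, conditional on the event that the algorithm never terminates (so that trials $t=1,2,\dots$ all occur), one has $\gamma_t(\bm{x}) \to 0$ for every $\bm{x}$ with probability $1$, which contradicts non-termination and forces $\PR(\text{non-termination}) = 0$.

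On the non-termination event, the Bernoulli variables $r_t \sim \mathcal{B}(p_t)$ are drawn for all $t$, independently of the GP data. Reading the regularity conditions {\sf (A1)--(A4)} as supplying $\sum_t p_t = \infty$ together with independence, the second Borel--Cantelli lemma yields $r_t = 1$ infinitely often almost surely; since each random step selects $\bm{x}$ uniformly from the finite set $\mathcal{X}$, every $\bm{x}$ is targeted infinitely often. Each targeting produces an independent draw $\bm{s} \sim g(\cdot \mid \bm{\theta}_{\bm{x}})$, so for every $\bm{x}$ we accumulate infinitely many i.i.d.\ samples from $g(\cdot \mid \bm{\theta}_{\bm{x}})$, and these are almost surely dense in, and accumulate throughout the interior of, $\mathrm{supp}\, g(\cdot \mid \bm{\theta}_{\bm{x}})$.

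Next I would push this denseness into variance decay. The posterior variance $\sigma_t^2(\cdot)$ depends only on observation locations, and with a continuous kernel $k$ the accumulation of infinitely many observations in every neighborhood of an interior support point $\bm{s}$ forces $\sigma_t(\bm{s}) \to 0$. Combined with posterior-mean consistency $\mu_t(\bm{s}) \to f(\bm{s})$ and the fact that the level set $\{\bm{s} : f(\bm{s}) = h\}$ has $g$-measure zero, the ratio $(h-\mu_t(\bm{s}))/\sigma_t(\bm{s})$ diverges in absolute value for $g$-a.e.\ $\bm{s}$, so the integrand $\Phi_{\bm{s}}(1-\Phi_{\bm{s}}) \to 0$ pointwise a.e. Because $0 \le \Phi_{\bm{s}}(1-\Phi_{\bm{s}}) \le 1/4$, dominated convergence applied to \eqref{eq:def-gammat2} gives $\gamma_t^2(\bm{x}) \to 0$ for every $\bm{x}$, the desired contradiction.

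The hard part will be the variance-decay claim with noisy observations: a single nearby observation only reduces $\sigma_t(\bm{s})$ to order $\sigma^2$, so I must argue quantitatively that the accumulation of infinitely many observations in shrinking neighborhoods of $\bm{s}$ averages out the noise and drives $\sigma_t(\bm{s})$ to $0$, uniformly enough across the $g$-almost-everywhere support to justify dominated convergence. Secondary care is needed at the boundary of $\mathrm{supp}\, g$ and on $\{f=h\}$, both dispatched as $g$-null sets, and in making the Borel--Cantelli step rigorous despite the fact that whether trial $t$ occurs is itself history-dependent — handled by conditioning on the non-termination event and exploiting that the $r_t$ are independent of the GP observations.
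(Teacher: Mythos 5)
Your reduction is the same as the paper's: a point stays unclassified only while $\beta^{1/2}\gamma_t(\bm{x}) > \epsilon$, so termination follows once $\max_{\bm{x}\in\mathcal{X}}\gamma_t^2(\bm{x}) < \epsilon^2\beta^{-1}$, and the entire problem is to show this happens almost surely. Where you diverge is in how $\gamma_t \to 0$ is obtained, and there the proposal has genuine gaps. Your argument never uses conditions {\sf (A2)}--{\sf (A4)} as stated; instead it substitutes two claims that are neither among the assumptions nor proven: posterior-mean consistency $\mu_t(\bm{s}) \to f(\bm{s})$, and $g$-nullity of the level set $\{f = h\}$. Condition {\sf (A2)} exists precisely so that nothing like posterior-mean consistency has to be established: it directly assumes that the $g$-mass of the region where the posterior mean lies within $\delta_\xi$ of $h$ can be made small in the limit, and the paper's proof splits $\gamma_t^2(\bm{x})$ into the integral over that region (controlled by {\sf (A2)}) and its complement, where $\Phi_{\bm{s}}(1-\Phi_{\bm{s}}) \le \Phi(-\delta_\xi/\sigma_t(\bm{s}))$. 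Consistency of a GP posterior mean under noisy, randomly located observations is a nontrivial theorem in its own right, and your proposal gives no argument for it, so the pointwise a.e.\ convergence feeding your dominated-convergence step is unsupported.

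The variance-decay claim you flag as ``the hard part'' is indeed the crux, and it is left unproven. The paper does not prove it from scratch either: it imports it as Lemma~\ref{lem:as1cov} (citing Theorem 4.2 of \cite{inatsu2019active}), but that lemma gives decay only on \emph{finite} subsets of $\tilde{D}$. The real work in the paper's proof is upgrading finite-set decay to decay uniform over the uncountable set $\tilde{D}$, via the continuity and equicontinuity conditions {\sf (A3)}, {\sf (A4)}, compactness of $\tilde{D}$ (Lemma~\ref{lem:compact}), and a finite-subcover argument. Your pointwise-a.e.-plus-DCT route would let you skip uniformity, but only if pointwise decay at $g$-almost every point of $\tilde{D}$ were available, and that does not follow from the finite-set lemma without exactly the kind of {\sf (A3)}/{\sf (A4)} continuity argument you omit. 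A minor further issue: conditioning on the non-termination event distorts the joint law of the $r_t$, so Borel--Cantelli cannot be applied verbatim; the clean fix, implicit in the paper, is to analyze the algorithm run indefinitely and show $\max_{\bm{x}\in\mathcal{X}}\gamma_t^2(\bm{x}) \to 0$ almost surely along that infinite run.
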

The proof is given in Appendix \ref{sec:appendix_owaru}.
%

\section{Numerical Experiment}\label{sec:exp}
In this section, we compared the performance of existing methods and the proposed method through numerical experiments, and confirmed the effectiveness of the proposed method.
For comparison, we considered existing methods
 Straddle\cite{bryan2006active},\ MILE\cite{zanette2018robust} and random sampling.
On the other hand, we used $\beta^{1/2} = 3$ for calculating $\hat{a}_t ({\bm{x}})$.
Furthermore,  estimation of $\mathcal{H}$ was also performed using $\beta^{1/2} = 3$.
In this experiments, we set $p_t=0$ and $\epsilon=0$ for simplicity.
Moreover, we used $F1$-score as the classification accuracy.
In addition, for each synthetic/real function, we calculated the true probability $p^\ast _{\bm {x} } $ by using 100,000 Monte Carlo simulations and defined the true  $\mathcal{H}$.

\subsection{Synthesic Experiment}
\subsubsection{1d-synthesic function}\label{subsub:1dimf}
We confirmed the  classification accuracy and the goodness of the approximation of AF in IU-rLSE by using the following function $f(x)$:
\begin{eqnarray}
f(x) = 3 - 40x + 38x^2 - 11x^3 + x^4. \label{eq:1dimf}
\end{eqnarray}
In addition, we defined $\mathcal{X}$ as the grid points when $[-0.5, 5.5]$ divided into 40.
Furthermore, we used Gaussian kernel $k({\bm{x}},{\bm{x}}') = \sigma^2_f \exp ( \| {\bm{x}} - {\bm{x}}' \|^2/L)$ and set
$\sigma_f^2 = 100$ and
$L=0.5$.
Moreover, we used $\sigma^2 = 10^{-4}$ as the error variance and $h=8$ as the threshold for $f(x)$.
In this experiment, we considered the following two distributions as the input distribution:
    \begin{description}
        \item[Case1] $\bm{S}(\bm{x}) = \bm{x} + Gamma(5,\ 0.03)$.
        \item[Case2] $\bm{S}(\bm{x}) = \bm{x} + \mathcal{N}(0,\ 0.07^2)$.
    \end{description}
Here, $Gamma (a,b)$ is the gamma distribution with parameters $a$ and $b$.
Experiment results are given in Figure \ref{fig:1d_prec}.
From Figure \ref{fig:1d_prec}, we can confirm that the proposed method has better performance than existing methods.
Note that existing methods Straddle, MILE and RS focus on the classification for $f$.
Recall that our target function is $p^\ast_{ {\bm{x}} }$, not $f$.
Thus, since the classification target in  existing methods is different, it is natural that the accuracy is low.
However, the classification procedure in the proposed method can also be applied to existing methods.
Specifically, in each iteration of IU-rLSE, classification is performed using  \eqref{eq:est_high_set}, \eqref{eq:est_low_set}, and  existing methods are used only for selecting the next evaluation point.
In other words, only the acquisition function of the existing method is used, and the proposed method is used as the classification method.
Hereinafter, this method will be used as the existing method.



\begin{figure}[t]
    \begin{center}
        \includegraphics[scale=0.24]{./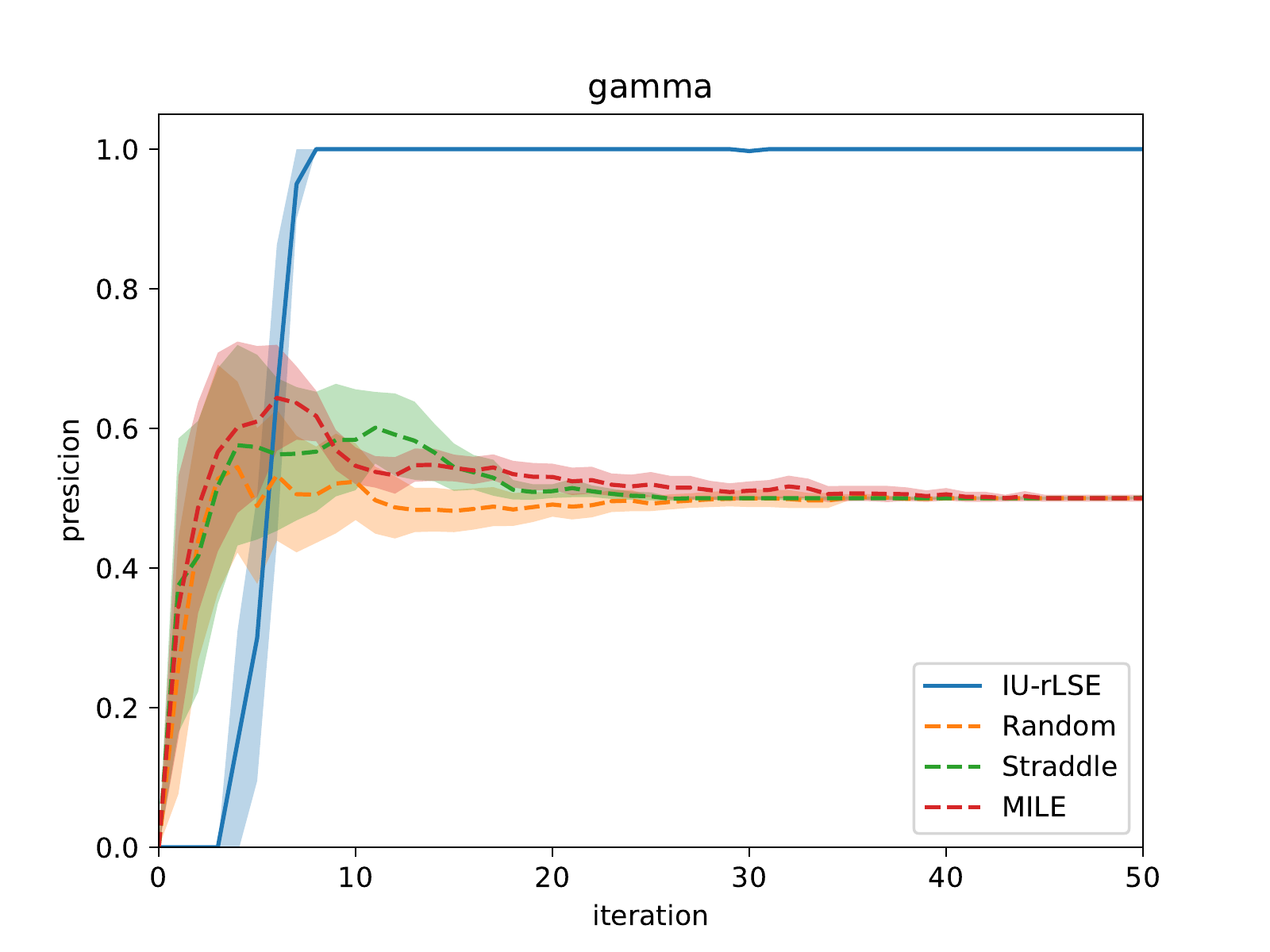}
        \includegraphics[scale=0.24]{./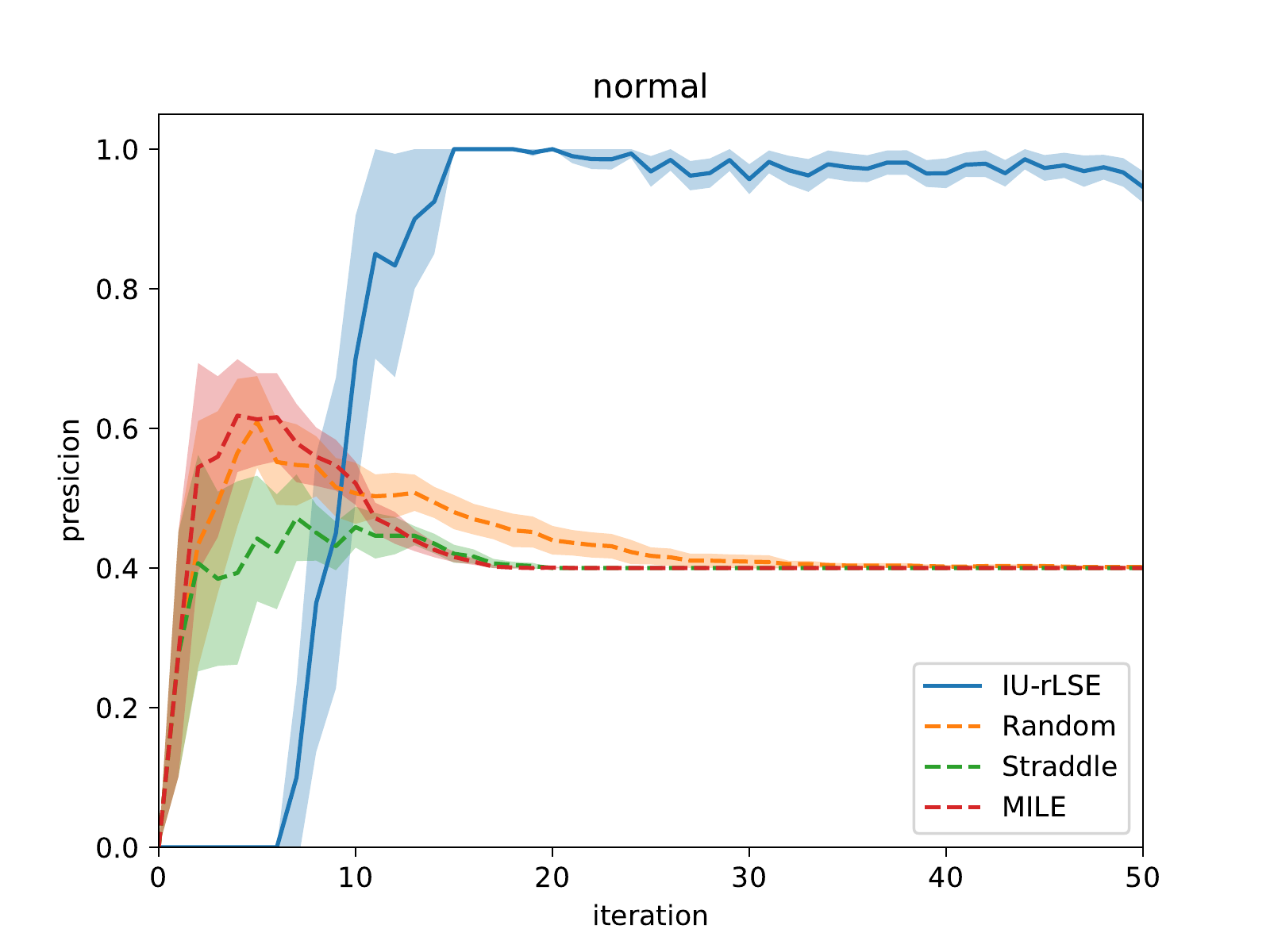}
        \caption{Average accuracy based on 20 Monte Carlo simulations for the one-dimensional synthetic function.
The left and right side figures represent  Case1 and Case2, respectively.
Shaded areas represent confidence intervals for $F1$-score ($ \pm 1.96 \times $ [standard error]).}
        \label{fig:1d_prec}
    \end{center}
\end{figure}
\subsubsection{Sinusoidal function}\label{subsub:sin}
In this subsection, we used $f(x_1, x_2) = -\sin(10x_1) - \cos(4x_2) + \cos(3x_1x_2)$ as the true function.
Here,
in numerical experiments in \cite{zanette2018robust}, $-f(x_1,x_2)$ was used as the true function.
Moreover, we defined $\mathcal{X}$ as the grid points when
  $[0, 1] \times [0, 2]$ divided into $30 \times 60$.
Furthermore, we used the Gaussian kernel with
 $\sigma_f^2 = e^2$ and
$L=2e^{-3}$
In addition, we used
  $\sigma^2 = 10^{-4}$ and $h=-0.5$.
\begin{figure}[t]
    \begin{center}
        \includegraphics[scale=0.26]{./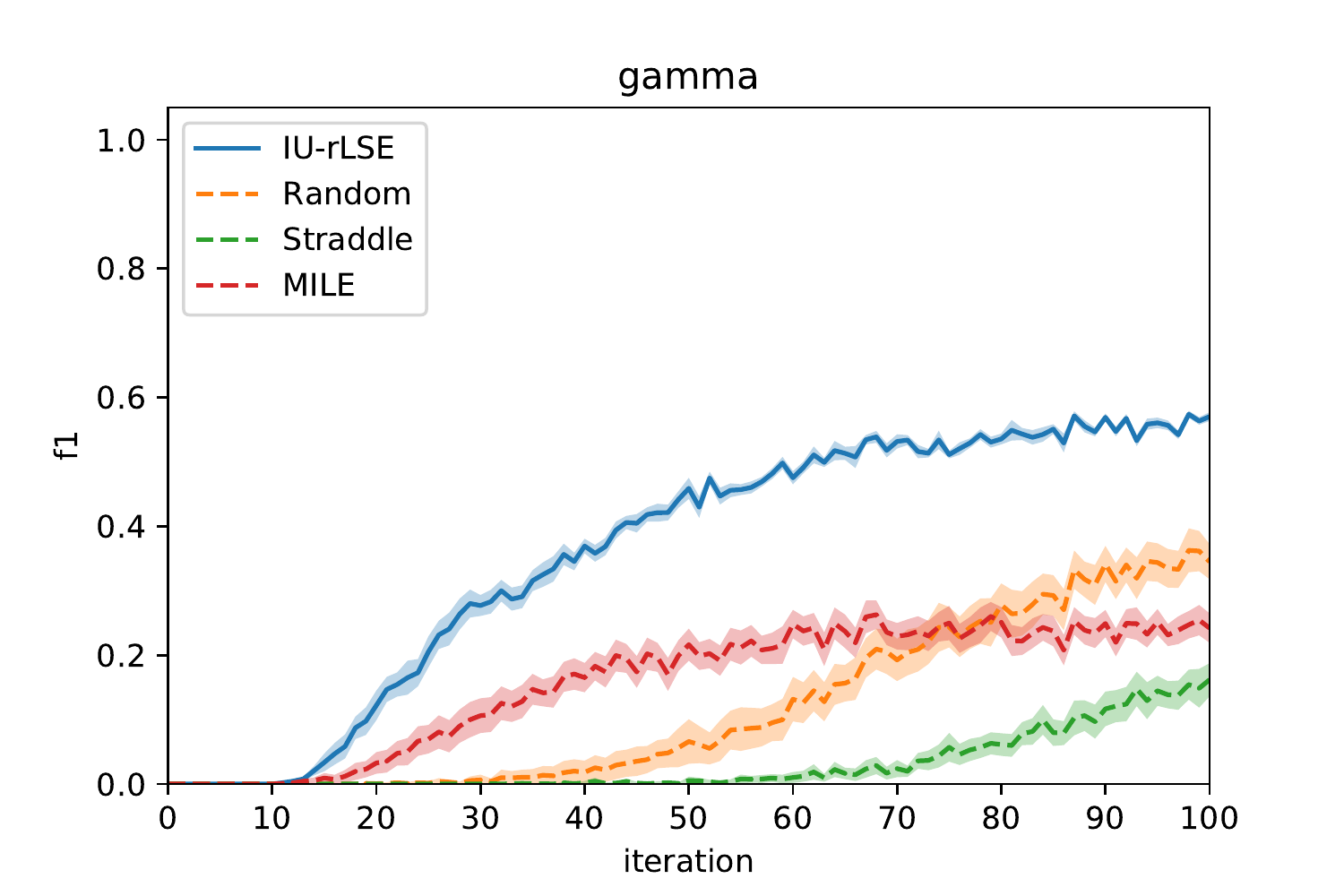}
        \includegraphics[scale=0.26]{./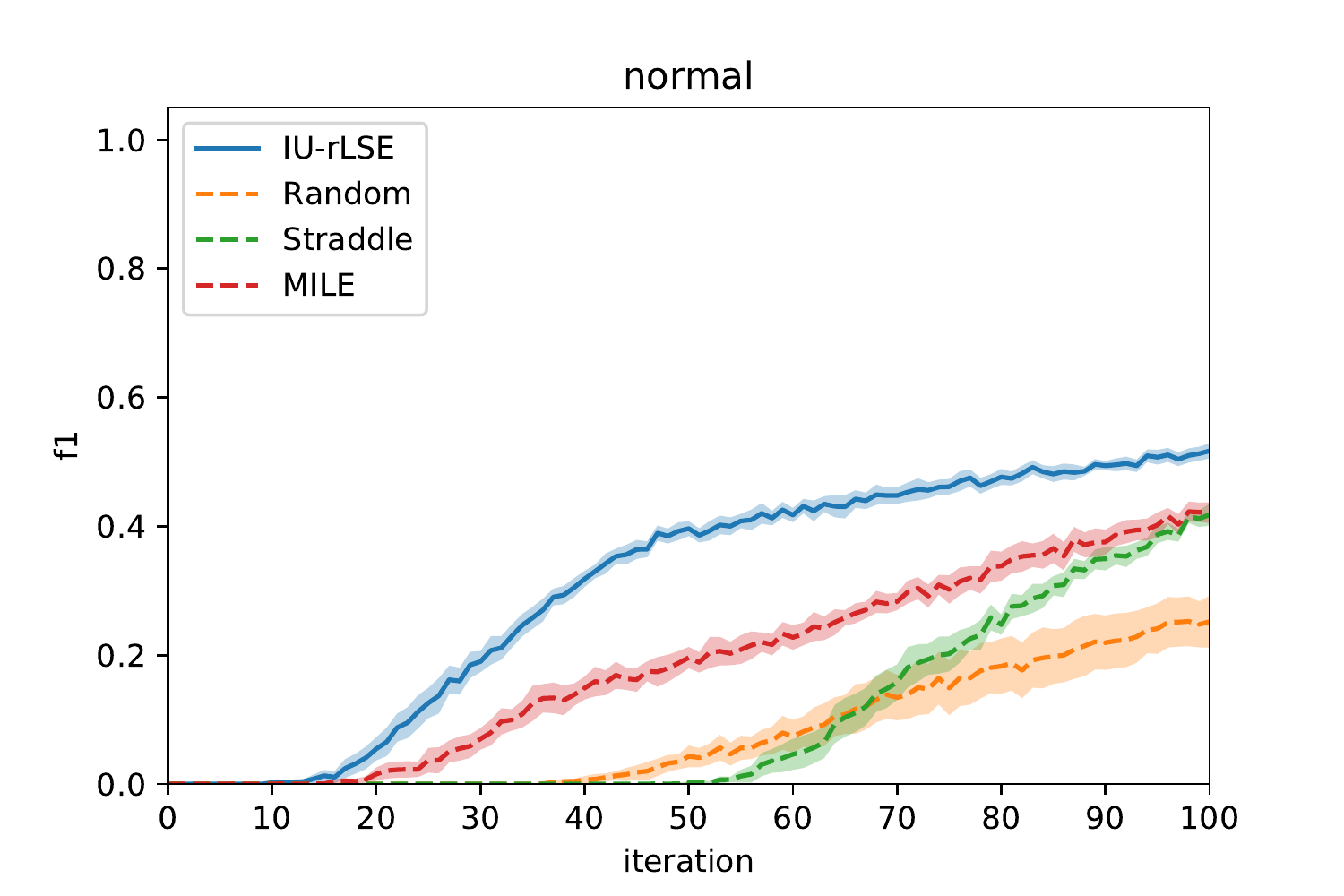}
        \caption{Average $F1$-score based on 20 Monte Carlo simulations for the Sinusoidal function.
The left and right side figures represent  Case1 and Case2, respectively.}
        \label{fig:sinusoidal}
    \end{center}
\end{figure}

In this experiment, we assumed that the input was two dimensional random vector whose elements have same distribution and are mutually independent.
Furthermore, as the distribution of each element, the same setting as in previous subsection was used.
Figure \ref{fig:sinusoidal} shows the experiment result based on 20 Monte Carlo simulations.
From Figure \ref{fig:sinusoidal}, we can confirm that the $F1$-score  based on  the proposed  method is larger  than those of existing methods.

\subsubsection{Himmelblau function}
In this subsection, as the true function, we considered the following Himmelblau function with added $-100$:
\begin{eqnarray*}
f(x_1,\ x_2) = (x_1^2 + x_2 - 11)^2 + (x_1 + y_1^2 - 7)^2 - 100.
\end{eqnarray*}
We defined $\mathcal{X}$ as the grid points when
 $[-5, 5] \times [-5, 5]$ divided into $50 \times 50$.
Moreover, we used Gaussian kernel with
  $\sigma_f^2 = e^8$ and
$L=2$
 Furthermore, we set $\sigma^2 = 10^{-4}$ and $h=0$.

In this experiment, we assumed the following two cases for the input distribution of each element:
\begin{description}
    \item[Case1] $\bm{S}(\bm{x}) = \bm{x} + Gamma(5,\ 0.15)$
    \item[Case2] $\bm{S}(\bm{x}) = \bm{x} + \mathcal{N}(0,\ 0.5^2)$
\end{description}
Figure \ref{fig:himmelblau} shows the experiment result based on 20 Monte Carlo simulations.
Also in this experiment, we can confirm the similar results as in the previous experiments.

\begin{figure}
    \begin{center}
        \includegraphics[scale=0.24]{./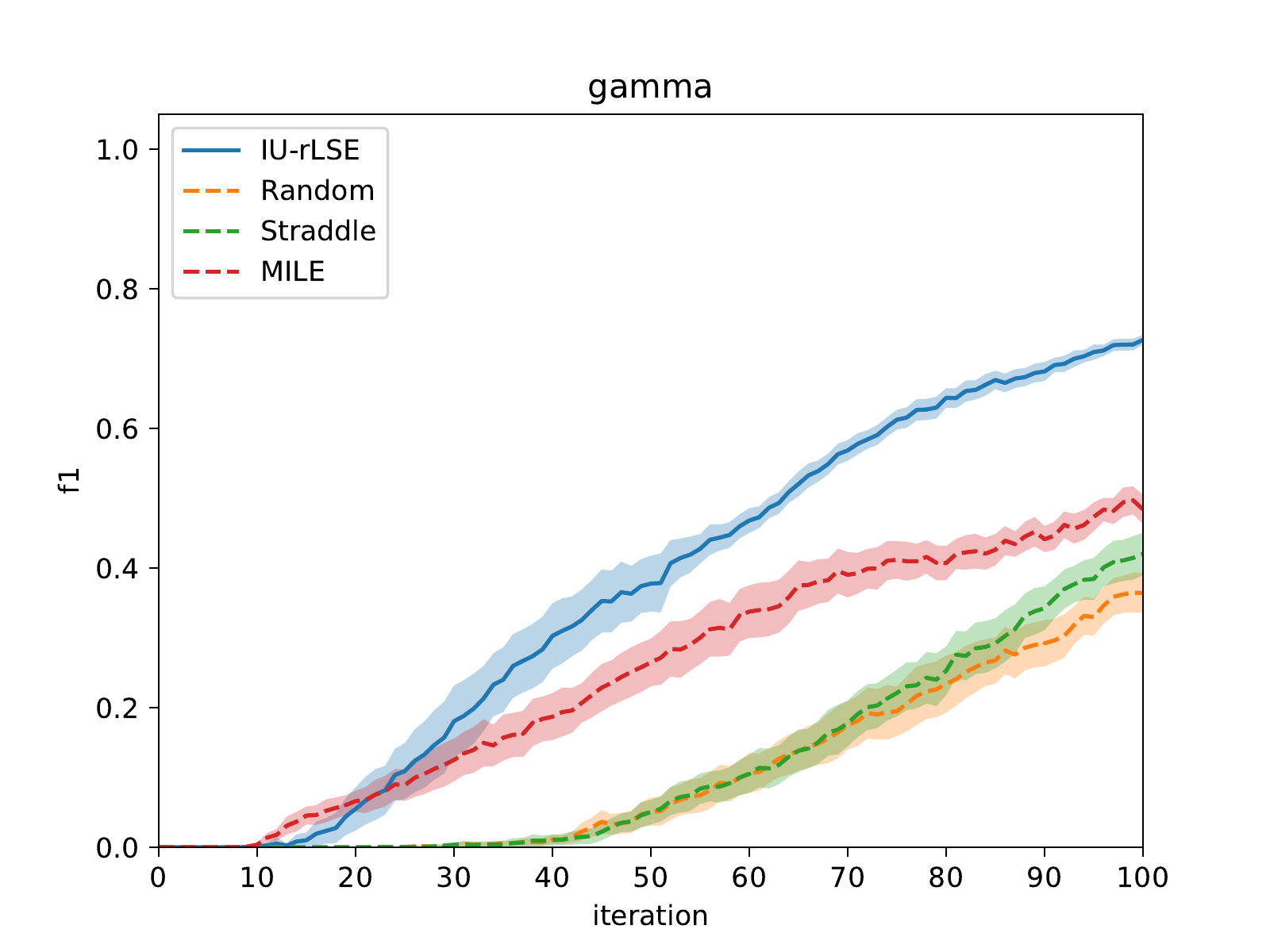}
        \includegraphics[scale=0.24]{./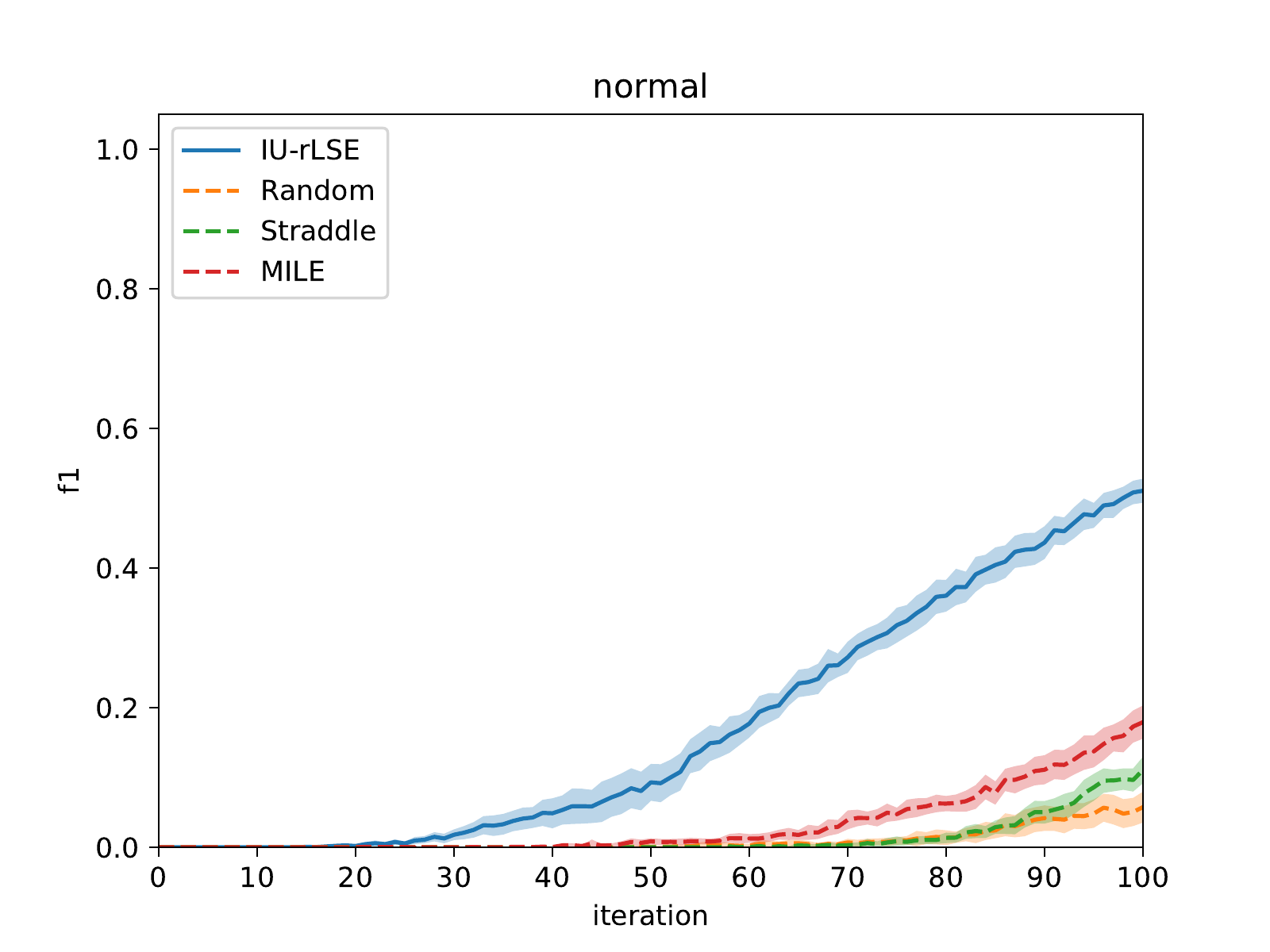}
        \caption{Average $F1$-score based on 20 Monte Carlo simulations for Himmelblau function.
The left and right side figures represent  Case1 and Case2, respectively.}
        \label{fig:himmelblau}
    \end{center}
\end{figure}
\subsubsection{1d-synthesic function with unknown inputs distribution}
In this subsection, we considered the situation that  input distributions are unknown.
We considered the  same setting as in Subsection \ref{subsub:1dimf} except input distributions.
In this experiment, we considered the following input distribution:
\begin{eqnarray*}
    \bm{S}(\bm{x}) = \bm{x} + \mathcal{N}(\hat{\mu},\ \hat{\sigma}^2).
\end{eqnarray*}
Under this setting, we considered the following two cases:
\begin{description}
    \item[Case1] The true parameter is $(\hat{\mu},\ \hat{\sigma}^2) = (0,\ 0.4^2)$, and assume that  $\hat{\mu}$ is known and  $\hat{\sigma}^2$ is unknown.
    \item[Case2] The true parameter is $(\hat{\mu},\ \hat{\sigma}^2) = (0.4,\ 0.4^2)$, and assume that  $\hat{\mu}$ is unknown and  $\hat{\sigma}^2$ is known.
\end{description}
In Case1, we used $\pi(\hat{\sigma}^{-2}) = Gamma(3, 0.48)$ as the prior distribution of $\hat{\sigma}^{-2}$.
Similarly, in Case2, we used $\pi(\hat{\mu}) = \mathcal{N}(0, 0.8^2)$ as the prior distribution of $\hat{\mu}$.
Note that   posterior distributions of $g_t(\bm{s} \mid \bm{\theta}_{\bm{x}})$ in Case1 and Case2 are given by $t$-distribution and normal distribution, respectively (see, e.g., \cite{bishop2006pattern}).

The experiment results are shown in Figure \ref{fig:unknown_noise}.
%
From Figure \ref{fig:unknown_noise}, even in this setting, we can  confirm  that the proposed method has better performance than  existing methods.
\begin{figure}[t]
    \begin{center}
        \includegraphics[scale=0.26]{./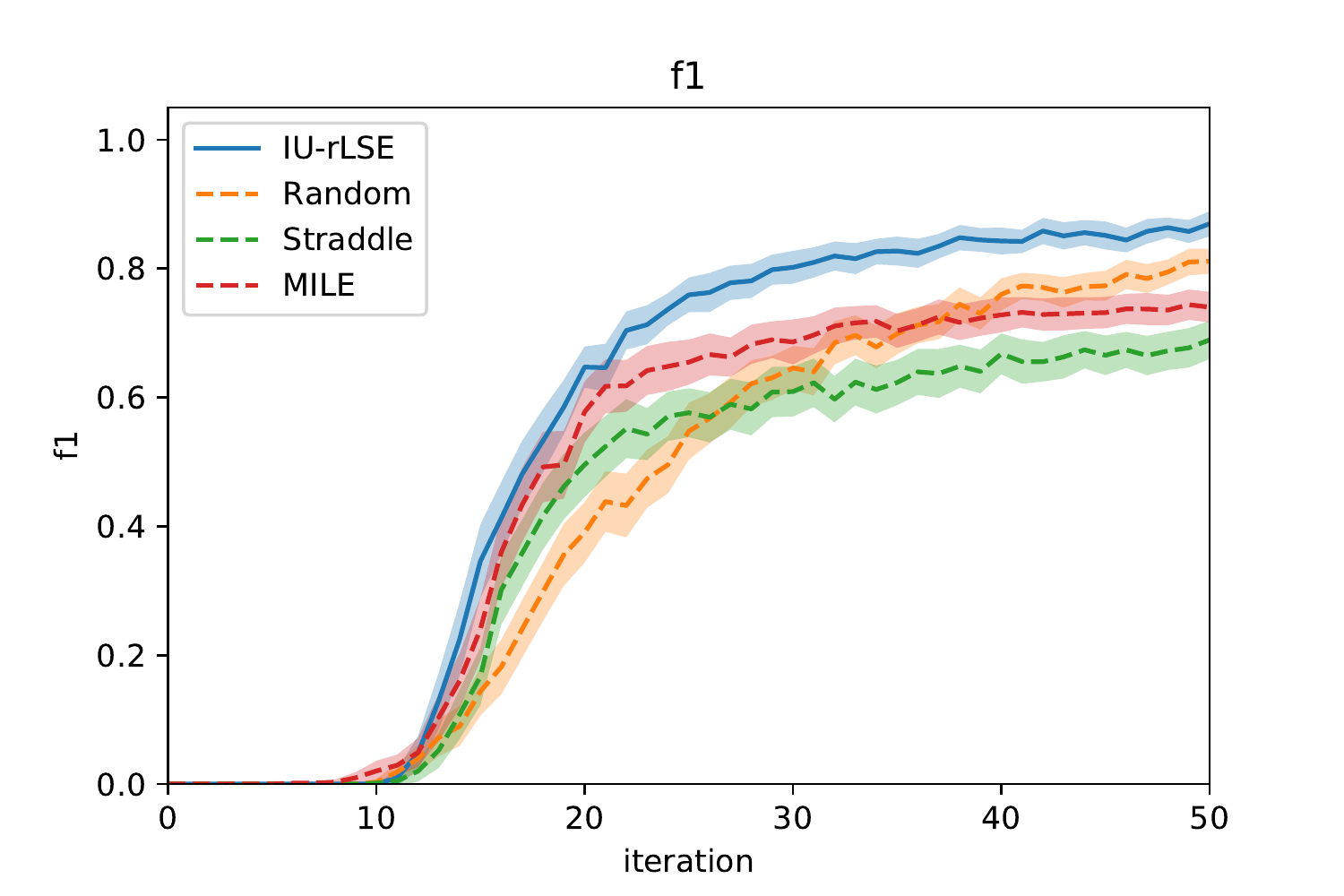}
        \includegraphics[scale=0.26]{./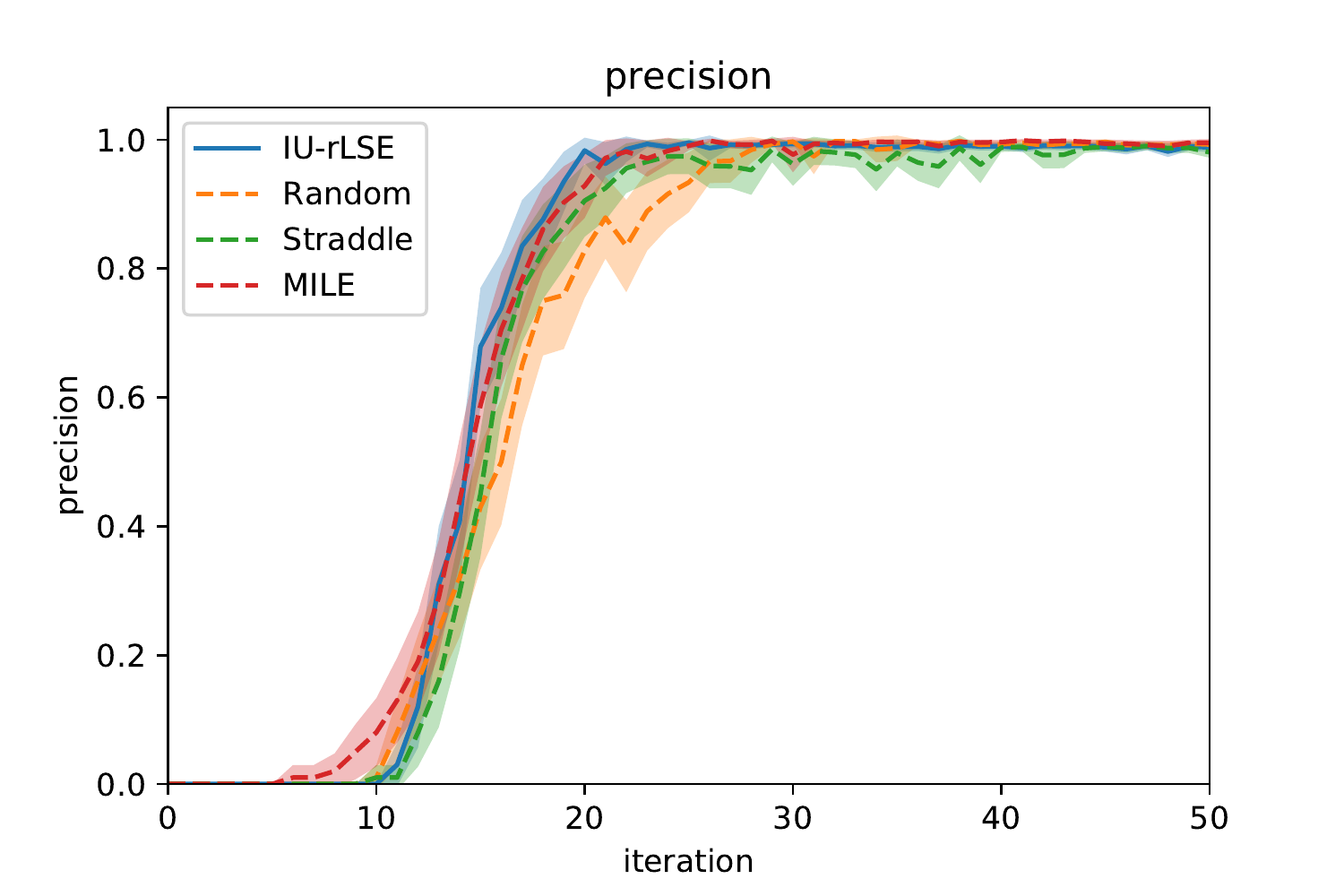}
        \includegraphics[scale=0.26]{./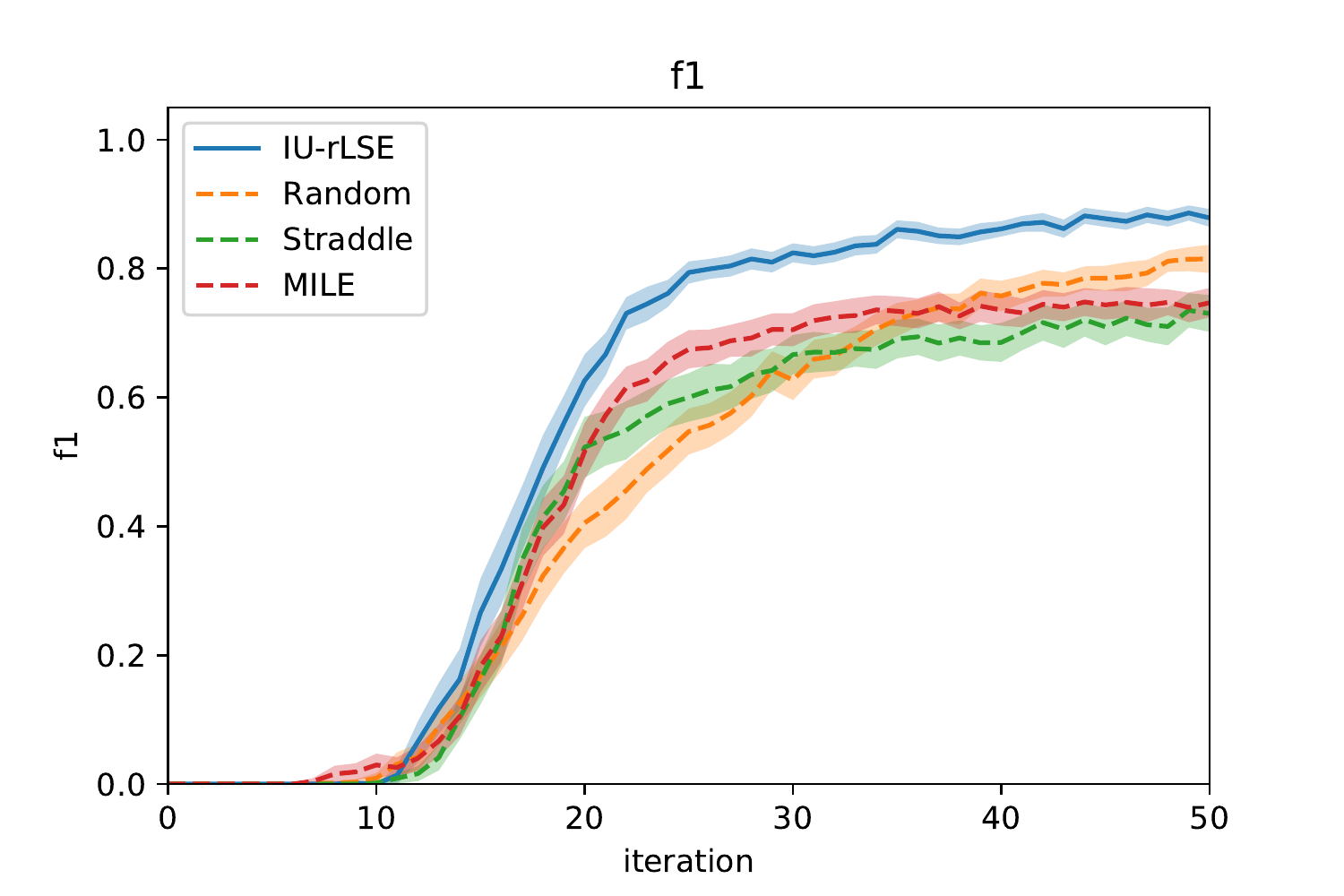}
        \includegraphics[scale=0.26]{./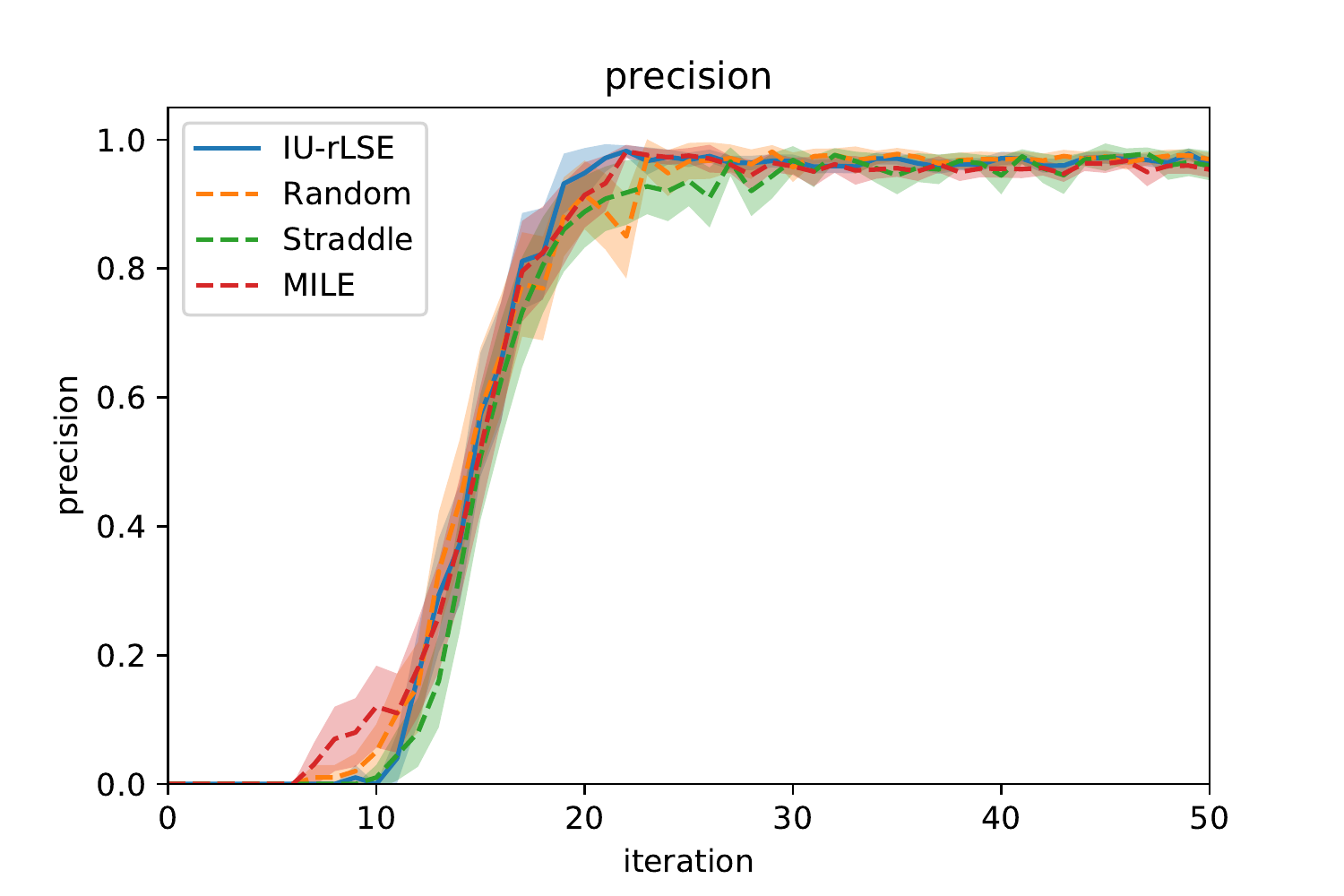}
        \caption{Experiment results based on 100 Monte Carlo simulations.
First and second (third and fourth) figures represent the results in Case1 (Case2).
The $F1$-score in each case is shown in first and third figures, and precision in each case is shown in second and fourth figures.}
        \label{fig:unknown_noise}
    \end{center}
\end{figure}

\subsection{Real-Data Experiment}
In this subsection, we confirmed the classification accuracy by using the Combined Cycle Power Plant (CCPP) dataset \cite{Dua:2019,article,inproceedings}.
CCPP contains $ 9568 $ instances and consists of four parameters (Temperature, Ambient Pressure, Relative humidity, Exhaust Vaccume) representing the state in CCPP as inputs, and the amount of power generation with respect to time average as the output.
Here, accurate control of CCPP state parameters is difficult due to environmental factors and control errors, and there is  input uncertainty.
We first standardized the output of each instance to  average  0, and normalized each input feature to  average  0  and variance 1.
In this experiment, we first extracted $ 7568 $ data randomly, calculated the posterior mean of GP using this, and
 considered  it as the true function.
The remaining 2000 data were used as the set of candidate points   $ \mathcal {X} $.
We used Gaussian kernel with
  $\sigma_f^2 = 300$ and $ L = 2$, and set
 $\sigma^2 = 0.5$ and $h = -15$.
As the input distribution, we used
 $\bm{S}(\bm{x}) = \bm{x} + \mathcal{N}(0,\ 0.125^2)$.
The experiment results based on 20 Monte Carlo simulations are shown in Figure \ref{fig:power_plant}.
From Figure \ref{fig:power_plant} on left, we can see that the $F1$-score for  the proposed method is larger than those of existing methods.
 Furthermore, we performed the similar experiment as in Subsection \ref{subsub:1dimf}.
From Figure \ref{fig:power_plant} on right, we can see that  precision of the proposed method tends to 1.
On the other hand, we can also see that precision of existing methods (with focus on the classification of $f$)  do not tend to 1.
\begin{figure}
    \begin{center}
        \includegraphics[scale=0.24]{./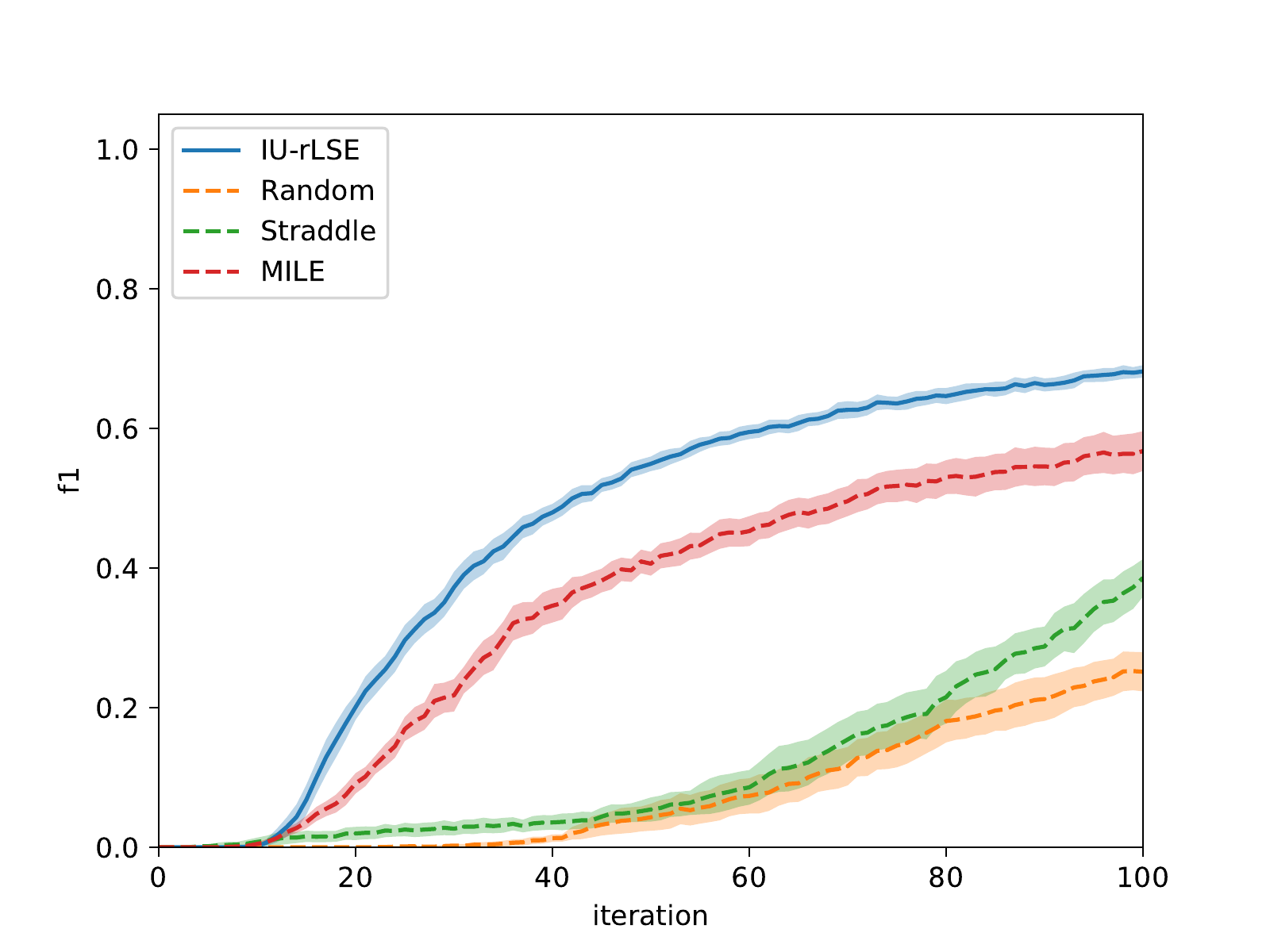}
        \includegraphics[scale=0.24]{./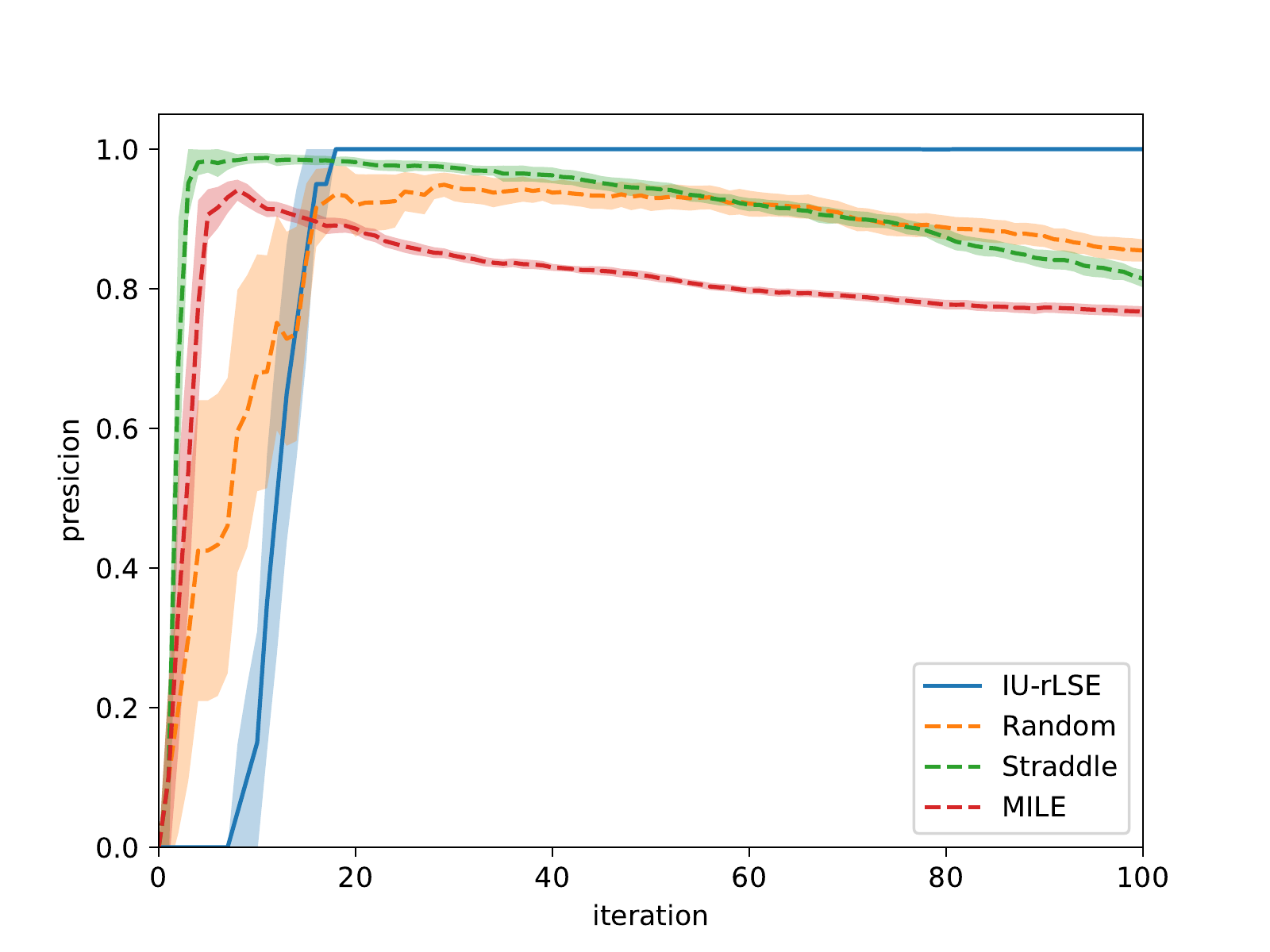}
        \caption{
Average $F1$-score(left) and precision(right) for the Combined Cycle Power Plant Dataset based on $20$ Monte Carlo simulations.}
        \label{fig:power_plant}
    \end{center}
\end{figure}

\section{Conclusion}
We considered the problem for identifying input points where probabilities that the black-box function $f$ falls below the threshold $h$ are more than $ \alpha $ in the situation which inputs have uncertain.
We proposed the level set estimation method and acquisition functions by assuming GP  as the prior distribution of $ f $ and constructing  credible intervals for probabilities that $ f $ falls below the threshold $ h $ under input uncertainty.
Through theoretical analysis and numerical experiments, it was confirmed that the proposed method has better performance than other methods.
\section*{Acknowledgements}
This work was partially supported by MEXT KAKENHI (17H00758, 16H06538), JST CREST (JPMJCR1302, JPMJCR1502), RIKEN Center for Advanced Intelligence Project, and JST support program for starting up innovation-hub on materials research by information integration initiative.

\bibliographystyle{apalike}
\bibliography{myref}

\clearpage
\clearpage
\appendix
\section{Derivation of Proposed Method}\label{sec:tech_proof}
\subsection{Deteils of Estimation about $\mathcal{H}$}
In this subsection, we discuss the details of estimating $\mathcal{H}$ based on $p_{t,\bm{x}}$.
First, we prove the existence of $p_{t,\bm{x}}$.
\begin{lemma}\label{lem:int_process}
   There exists a random variable $p_{t,\bm{x}}$.
\end{lemma}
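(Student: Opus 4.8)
The statement asserts that $p_{t,\bm{x}}$, defined as an integral over $\D$ of a functional of the random sample path $f_t$, is a genuine (finite, measurable) random variable. Since the integrand $\1[f_t(\bm{s}) < h]$ lies in $[0,1]$ and $g(\bm{s}\mid\bm{\theta}_{\bm{x}})$ is a probability density, boundedness and finiteness are immediate: for every realization of $f_t$ we have $0 \le p_{t,\bm{x}} \le \int_{\D} g(\bm{s}\mid\bm{\theta}_{\bm{x}})\,d\bm{s} \le 1$. Hence the only substantive content is \emph{measurability} of the map $\omega \mapsto p_{t,\bm{x}}$ on the underlying probability space $(\Omega, \mathcal{F}, \PR)$ carrying the posterior GP.

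The plan is to establish joint measurability of the integrand and then invoke Tonelli's theorem. First I would note that, under the assumption that the kernel $k$ (hence the posterior kernel $k_t$) is continuous, as holds for the Gaussian kernel used throughout, the posterior process $f_t$ admits a version with almost surely continuous sample paths on the compact domain $\D$. A standard result on stochastic processes with continuous paths then guarantees that such a process is jointly measurable, i.e. the map $(\omega, \bm{s}) \mapsto f_t(\bm{s})$ is $\mathcal{F} \otimes \mathcal{B}(\D)$-measurable. Composing with the Borel-measurable function $r \mapsto \1[r < h]$ (whose relevant preimage is the open half-line $(-\infty, h)$) shows that $(\omega,\bm{s}) \mapsto \1[f_t(\bm{s}) < h]$ is jointly measurable, and multiplying by the $\bm{s}$-measurable density $g(\bm{s}\mid\bm{\theta}_{\bm{x}})$ preserves this.

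Finally, because the product $\1[f_t(\bm{s}) < h]\,g(\bm{s}\mid\bm{\theta}_{\bm{x}})$ is nonnegative and jointly measurable, Tonelli's theorem ensures that the partial integral $\omega \mapsto \int_{\D}\1[f_t(\bm{s}) < h]\,g(\bm{s}\mid\bm{\theta}_{\bm{x}})\,d\bm{s}$ is a well-defined $\mathcal{F}$-measurable function of $\omega$ taking values in $[0,1]$. This is precisely the claim that $p_{t,\bm{x}}$ exists as a random variable, and as a byproduct it records that $p_{t,\bm{x}} \in [0,1]$, consistent with its interpretation as a probability.

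I expect the main obstacle to be the rigorous justification of joint measurability: one must pass from the marginal fact that each $f_t(\bm{s})$ is a random variable (measurability in $\omega$ for fixed $\bm{s}$) to measurability on the product space $\Omega \times \D$, which is what Tonelli requires. Marginal measurability alone does not suffice in general; the bridge is sample-path continuity, which on the compact set $\D$ renders $f_t$ a separable, measurable process. I would therefore make the path-continuity hypothesis (equivalently, the continuity/smoothness of the kernel) explicit as the cornerstone of the argument.
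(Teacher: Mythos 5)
Your route is genuinely different from the paper's. The paper does not work pathwise at all: it invokes the classical mean-square integral existence criterion (Papoulis, Chapter 10), under which the integral of a stochastic process exists as an MS limit provided the double integral of the covariance of the integrand, here $\int_{\D\times\D}|\Cov(\1[f_t(\bm{s})<h],\1[f_t(\bm{s}')<h])|\,g(\bm{s}\mid\bm{\theta}_{\bm{x}})g(\bm{s}'\mid\bm{\theta}_{\bm{x}})\,d\bm{s}\,d\bm{s}'$, is finite; since the indicators take values in $\{0,1\}$ the covariance is bounded by $1$ and the double integral is at most $1$. That argument needs only second-order (covariance) information and no sample-path regularity whatsoever, at the price of defining $p_{t,\bm{x}}$ as a mean-square limit rather than as a pathwise Lebesgue integral. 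Your Tonelli argument, when it works, is in some ways stronger: it defines $p_{t,\bm{x}}(\omega)$ for (almost) every realization and gives $\mathcal{F}$-measurability and the bound $p_{t,\bm{x}}\in[0,1]$ directly.

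However, there is a genuine gap in the step you yourself flag as the cornerstone: continuity of the kernel $k$ (equivalently of $k_t$) does \emph{not} in general imply that the Gaussian process admits a version with almost surely continuous sample paths. Continuity of the covariance gives only mean-square continuity; there are classical examples of Gaussian processes with continuous covariance whose paths are almost surely unbounded on every interval, so no continuous modification exists. Path continuity requires a quantitative modulus (Kolmogorov's criterion, Dudley's entropy condition), which holds for the squared-exponential kernel used in the experiments but is not a consequence of continuity alone. The good news is that your argument does not need path continuity: mean-square continuity implies continuity in probability, and a process continuous in probability on a compact (separable metric) parameter set admits a jointly measurable modification; this is exactly the hypothesis Tonelli requires. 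Replacing ``continuous version'' by ``jointly measurable modification obtained from continuity in probability'' repairs the proof and makes it valid under the same assumption (kernel continuity) you intended to use.
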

\begin{proof}
From the definition of $p_{t,\bm{x}}$, it is sufficient to show that the  integral
    \begin{align*}
        \int_{\mathcal{D} \times \mathcal{D}} |\Cov(\1[f_t(\bm{s}) <&\ h], \1[f_t(\bm{s}^{\prime}) < h])| \nonumber \\
                &g(\bm{s} \mid \bm{\theta}_{\bm{x}})g(\bm{s}^{\prime} \mid \bm{\theta}_{\bm{x}})d\bm{s}d\bm{s}'
    \end{align*}
is finite (\cite{papoulis2002probability}, Chapter10).
    Noting that  $\1[f_t(\bm{s}) < h]  \in \{0, 1\}$, we have $|\Cov(\1[f_t(\bm{s}) < h], \1[f_t(\bm{s}^{\prime}) < h])| \leq 1.$
Therefore, we get
    \begin{alignat*}{2}
        & & \int_{\mathcal{D} \times \mathcal{D}} |\Cov(\1[f_t(\bm{s}) <&\ h], \1[f_t(\bm{s}^{\prime}) < h])|  \\
        & & &g(\bm{s} \mid \bm{\theta}_{\bm{x}})g(\bm{s}^{\prime} \mid \bm{\theta}_{\bm{x}})d\bm{s}d\bm{s}' \\
        &\leq& \int_{\mathcal{D} \times \mathcal{D}} g(\bm{s} \mid \bm{\theta}_{\bm{x}})g(\bm{s}^{\prime} \mid &\bm{\theta}_{\bm{x}})d\bm{s}d\bm{s}' \\
         &=& 1\hspace{80pt}& \\
         &<& +\infty.\hspace{66pt} &
    \end{alignat*}
\end{proof}
Next, the following lemma holds:
\begin{lemma}\label{lem:chebyshev}
    Let $\delta \in (0, 1)$.
    Then, with probability at least $1 - \delta$, it holds that
    \begin{equation*}
        |p_{t, \bm{x}} - \mu_t^{(p)}(\bm{x})| < \delta^{-\frac{1}{2}} \gamma_t(\bm{x})
    \end{equation*}
   where $ \mu_t^{(p)}(\bm{x})$ and $\gamma^2_t(\bm{x})$ are given by \eqref{eq:def-mutp} and \eqref{eq:def-gammat2}, respectively.
\end{lemma}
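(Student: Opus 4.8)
The plan is to recognize this as Chebyshev's inequality applied to the random variable $p_{t,\bm{x}}$, once two facts are in place: that $\mu_t^{(p)}(\bm{x})$ is its mean, and that its genuine variance is bounded above by $\gamma_t^2(\bm{x})$. The role of Lemma~\ref{lem:int_process} here is precisely to guarantee that the mean-square integral defining $p_{t,\bm{x}}$ exists with finite variance given by a double covariance integral, so that every interchange of expectation and integration below (Fubini) is justified.

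First I would confirm the mean. Writing $Z(\bm{s})=\1[f_t(\bm{s})<h]$ and exchanging the expectation with the integral against $g(\bm{s}\mid\bm{\theta}_{\bm{x}})$, we obtain $\E[p_{t,\bm{x}}]=\int_{\D}\PR[f_t(\bm{s})<h]\,g(\bm{s}\mid\bm{\theta}_{\bm{x}})\,d\bm{s}$. Since $f_t(\bm{s})\sim\mathcal{N}(\mu_t(\bm{s}),\sigma_t^2(\bm{s}))$ under the GP posterior, $\PR[f_t(\bm{s})<h]=\Phi_{\bm{s}}$, which reproduces $\mu_t^{(p)}(\bm{x})$ as in \eqref{eq:def-mutp}.

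The crucial step is the variance bound $\V[p_{t,\bm{x}}]\le\gamma_t^2(\bm{x})$, and I expect this to be the main obstacle, because $\gamma_t^2$ is defined as the integral of the \emph{pointwise} variances in \eqref{eq:def-gammat2} and is therefore only an overestimate of the true variance, which additionally involves off-diagonal covariances. By Lemma~\ref{lem:int_process} the variance equals $\int_{\D\times\D}\Cov(Z(\bm{s}),Z(\bm{s}'))\,g(\bm{s}\mid\bm{\theta}_{\bm{x}})g(\bm{s}'\mid\bm{\theta}_{\bm{x}})\,d\bm{s}\,d\bm{s}'$. Bounding each integrand by the Cauchy--Schwarz inequality for covariances, $\Cov(Z(\bm{s}),Z(\bm{s}'))\le\sqrt{\V[Z(\bm{s})]}\sqrt{\V[Z(\bm{s}')]}$, the double integral factors as $\left(\int_{\D}\sqrt{\V[Z(\bm{s})]}\,g(\bm{s}\mid\bm{\theta}_{\bm{x}})\,d\bm{s}\right)^2$. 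Since $g(\cdot\mid\bm{\theta}_{\bm{x}})$ is a probability density, Jensen's inequality for the convex map $u\mapsto u^2$ then gives $\left(\int_{\D}\sqrt{\V[Z(\bm{s})]}\,g\,d\bm{s}\right)^2\le\int_{\D}\V[Z(\bm{s})]\,g\,d\bm{s}=\int_{\D}\Phi_{\bm{s}}(1-\Phi_{\bm{s}})\,g(\bm{s}\mid\bm{\theta}_{\bm{x}})\,d\bm{s}=\gamma_t^2(\bm{x})$. The point is that this overestimate is conservative in exactly the direction needed.

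Finally I would invoke Chebyshev's inequality with $\lambda=\delta^{-1/2}\gamma_t(\bm{x})$, giving $\PR[|p_{t,\bm{x}}-\mu_t^{(p)}(\bm{x})|\ge\lambda]\le\V[p_{t,\bm{x}}]/\lambda^2=\delta\,\V[p_{t,\bm{x}}]/\gamma_t^2(\bm{x})\le\delta$, where the last inequality uses the variance bound above. Passing to the complementary event yields the strict inequality $|p_{t,\bm{x}}-\mu_t^{(p)}(\bm{x})|<\delta^{-1/2}\gamma_t(\bm{x})$ with probability at least $1-\delta$, as claimed. The conservative bound $\V[p_{t,\bm{x}}]\le\gamma_t^2(\bm{x})$ is what makes the credible interval $Q_t(\bm{x})$ contain $p_{t,\bm{x}}$ with the stated probability (indeed with a little room to spare).
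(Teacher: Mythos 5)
Your proposal is correct and follows essentially the same route as the paper: apply Chebyshev's inequality to $p_{t,\bm{x}}$ after dominating its true variance (the double covariance integral guaranteed finite by Lemma~\ref{lem:int_process}) by $\gamma_t^2(\bm{x})$. The only cosmetic difference is in the covariance bound — the paper uses $\Cov[X,Y]\le(\V[X]+\V[Y])/2$ and symmetry, while you use Cauchy--Schwarz followed by Jensen — two elementary steps that yield the identical bound $\V[p_{t,\bm{x}}]\le\gamma_t^2(\bm{x})$.
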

\begin{proof}
    From Chebyshev's inequality,  for any $\epsilon > 0$, it holds that
    \begin{equation}
        {\rm Prob}[|p_{t, \bm{x}} - \mu_t^{(p)}(\bm{x})| \geq \epsilon] \leq \frac{     {\rm{Var}} [ p_{t,{\bm{x}} }]     }{\epsilon^2}.
        \label{eq:chebyshev}
    \end{equation}
Moreover, noting that $\Cov [X,Y] \leq (\V [X] + \V[Y])/2$, we obtain
    \begin{align}
        &\quad {\rm{Var}} [ p_{t,{\bm{x}} }] & \nonumber \\
        &= \int \int_{\D \times \D} \Cov(\1[f_t(\bm{s}) < h], \1[f_t(\bm{s}^{\prime}) < h]) \nonumber \\
            &\hspace{120pt} g(\bm{s}|\bm{\theta}_{\bm{x}})g(\bm{s}^{\prime}|\bm{\theta}_{\bm{x}})d\bm{s}d\bm{s}^{\prime} \nonumber \\
        &\leq \int \int_{\D \times \D} \frac{\V[\1[f_t(\bm{s}) < h]] + \V[\1[f_t(\bm{s}^{\prime}) < h]]}{2} &\nonumber \\
            & \hspace{120pt} g(\bm{s}|\bm{\theta}_{\bm{x}})g(\bm{s}^{\prime}|\bm{\theta}_{\bm{x}})d\bm{s}d\bm{s}^{\prime} \nonumber \\
        &=\int_{\D} \V[\1[f_t(\bm{s}) < h]]g(\bm{s}|\bm{\theta}_{\bm{x}})d\bm{s}.&
        \label{eq:ub_cov}
    \end{align}
    Hence, by combining  (\ref{eq:chebyshev}) and  (\ref{eq:ub_cov}) we get
    \begin{align*}
        &\quad {\rm Prob}[|p_{t, \bm{x}} - \mu_t^{(p)}(\bm{x})| \geq \epsilon] \\
        &\leq \frac{\int_{\D} \V[\1[f_t(\bm{s}) < h]]g(\bm{s}|\bm{\theta}_{\bm{x}})d\bm{s}}{\epsilon^2}  \\
        &= \frac{  \gamma^2_t(\bm{x})  }{\epsilon ^2}.
    \end{align*}
    Therefore, putting $\epsilon = \delta^{\frac{1}{2}} \gamma_t(\bm{x})$, the following holds  with probability at least $1-\delta$:
    \begin{equation*}
        |p_{t, \bm{x}} - \mu_t^{(p)}(\bm{x})| < \delta^{-\frac{1}{2}} \gamma_t(\bm{x}).
    \end{equation*}
\end{proof}

\subsection{Details of Aquisition Function}\label{subsec:aq_derive}
In this subsection, we derive several lemmas on the acquisition function.
First, the following lemma holds:
\begin{lemma}\label{lem:second_inequity}
Let $0 < \alpha < 1$ and  $\epsilon > 0$ with   $0 < \alpha - \epsilon < 1$.
Also let $\Phi_{\overline{\bm{s}}} \in [0, 1]$.
Then, the solution of the inequality
\begin{equation*}
\Phi_{\overline{\bm{s}}} - \beta^{\frac{1}{2}}\sqrt{\Phi_{\overline{\bm{s}}}(1 - \Phi_{\overline{\bm{s}}})} > \alpha - \epsilon
\end{equation*}
 is given by
\begin{equation*}
    c < \Phi_{\overline{\bm{s}}} \leq 1.
\end{equation*}
where
\begin{equation*}
    c = \frac{2(\alpha - \epsilon) + \beta + \sqrt{\beta^2 + 4(\alpha - \epsilon)\beta - 4(\alpha- \epsilon)^2\beta}}{2(1 + \beta)}
\end{equation*}
\end{lemma}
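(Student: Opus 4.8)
The plan is to introduce shorthand $a = \alpha - \epsilon$ and $x = \Phi_{\overline{\bm{s}}} \in [0,1]$, so that the inequality to be solved reads $x - \beta^{1/2}\sqrt{x(1-x)} > a$, equivalently $x - a > \beta^{1/2}\sqrt{x(1-x)}$. Since $x \in [0,1]$ forces $\sqrt{x(1-x)} \ge 0$, the right-hand side is nonnegative, and I would first argue that any solution must satisfy $x > a$: if $x \le a$ then the left-hand side $x-a$ is nonpositive while the right-hand side is nonnegative, so the strict inequality cannot hold.

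Restricting to $x > a$, both sides of $x - a > \beta^{1/2}\sqrt{x(1-x)}$ are nonnegative, so squaring is an equivalence. This yields $(x-a)^2 > \beta x(1-x)$, which I would expand and rearrange into the quadratic inequality $(1+\beta)x^2 - (2a+\beta)x + a^2 > 0$. The leading coefficient $1+\beta$ is positive, and the discriminant computes to $\beta^2 + 4a\beta - 4a^2\beta = \beta\bigl(\beta + 4a(1-a)\bigr)$, which is strictly positive under $\beta > 0$ and $0 < a < 1$; hence there are two distinct real roots, whose larger one is exactly the claimed $c$ after substituting $a = \alpha - \epsilon$.

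Since the quadratic is positive outside the interval between its roots, the quadratic inequality holds iff $x < x_-$ or $x > c$, where $x_-$ is the smaller root. To discard the branch $x < x_-$, I would evaluate the quadratic at $x = a$, obtaining $\beta a(a-1) < 0$ by using $0 < a < 1$ and $\beta > 0$; this shows $a$ lies strictly between the two roots, so $x_- < a < c$. Consequently, combining the necessary condition $x > a$ with the quadratic solution set leaves exactly $x > c$, and intersecting with the domain $[0,1]$ gives $c < x \le 1$. For completeness I would check the endpoint: at $x = 1$ the original left-hand side equals $1 > a$, confirming $x=1$ is a genuine solution, and evaluating the quadratic at $x=1$ gives $(1-a)^2 > 0$, which together with $x_- < 1$ certifies $c < 1$, so the reported interval is nonempty.

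The only delicate point is the squaring step: the main obstacle is to justify that squaring preserves equivalence, which is precisely why the preliminary reduction to $x > a$ is needed, and to verify via the sign of the quadratic at $x = a$ that the spurious lower branch $x < x_-$ never meets the region $x > a$. Everything else is routine algebra for the discriminant and root computation.
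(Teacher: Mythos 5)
Your proof is correct and takes essentially the same route as the paper: both derive the necessary condition $\Phi_{\overline{\bm{s}}} > \alpha - \epsilon$, square to obtain the quadratic $(1+\beta)x^2 - \{2(\alpha-\epsilon)+\beta\}x + (\alpha-\epsilon)^2 > 0$, and then locate its roots relative to $\alpha-\epsilon$ and $1$ to discard the lower branch. Your root-location arguments (evaluating the sign of the quadratic at $x=\alpha-\epsilon$ and at $x=1$) are a slightly cleaner substitute for the paper's direct bounding of the root expressions and its contradiction argument for $\Phi_{\overline{\bm{s}}}^+ \leq 1$, and your handling of the degenerate case $\sqrt{x(1-x)}=0$ is more careful than the paper's assertion that this square root is strictly positive, but the substance of the proof is the same.
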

\begin{proof}
First, the inequality
    \begin{equation*}
\Phi_{\overline{\bm{s}}} - (a - \epsilon)  > \sqrt{\beta \Phi_{\overline{\bm{s}}}(1 - \Phi_{\overline{\bm{s}}})}  \label{eq:ineq-1}
    \end{equation*}
holds because $\Phi_{\overline{\bm{s}}} - \beta^{\frac{1}{2}}\sqrt{\Phi_{\overline{\bm{s}}}(1 - \Phi_{\overline{\bm{s}}})} > \alpha - \epsilon$. Furthermore, since
     $\sqrt{\beta\Phi_{\overline{\bm{s}}}(1 - \Phi_{\overline{\bm{s}}})} > 0$, it holds that
    \begin{equation}
        \Phi_{\overline{\bm{s}}} > (a - \epsilon) .
        \label{eq:alpha_cond}
    \end{equation}
On the other hand, \eqref{eq:ineq-1} can be rewritten as
    \begin{alignat}{2}
        &\quad& \Phi_{\overline{\bm{s}}} - (a - \epsilon)  > \sqrt{\beta \Phi_{\overline{\bm{s}}}(1 - \Phi_{\overline{\bm{s}}})} &\nonumber \\
        &\Leftrightarrow& (1 + \beta)\Phi_{\overline{\bm{s}}}^2 - \{2(a - \epsilon) + \beta\}\Phi_{\overline{\bm{s}}}&  \nonumber \\
        &        &+ (\alpha - &\epsilon)^2 > 0  \label{eq:ineq-rewrite}.
    \end{alignat}
   Thus, by  using the quadratic formula, the solution of \eqref{eq:ineq-rewrite} is given by
        \begin{equation}
            \Phi_{\overline{\bm{s}}} < \Phi_{\overline{\bm{s}}}^-,\ \Phi_{\overline{\bm{s}}}^+ < \Phi_{\overline{\bm{s}}},
            \label{eq:ineq_ans}
        \end{equation}
        where
        \begin{align*}
            \Phi_{\overline{\bm{s}}}^-&=\frac{2(\alpha - \epsilon) + \beta - \sqrt{\beta^2 + 4(\alpha - \epsilon)\beta - 4(\alpha- \epsilon)^2\beta}}{2(1 + \beta)}, \\
            \Phi_{\overline{\bm{s}}}^+&=\frac{2(\alpha - \epsilon) + \beta + \sqrt{\beta^2 + 4(\alpha - \epsilon)\beta - 4(\alpha- \epsilon)^2\beta}}{2(1 + \beta)}.
        \end{align*}
        Moreover,  $\Phi_{\overline{\bm{s}}}^-$ and $ \Phi_{\overline{\bm{s}}}^+$ satisfy
        \begin{align*}
            \Phi_{\overline{\bm{s}}}^-&=\frac{2(\alpha - \epsilon) + \beta - \sqrt{\beta^2 + 4(\alpha - \epsilon)\beta - 4(\alpha- \epsilon)^2\beta}}{2(1 + \beta)} \\
                                       &\leq\frac{2(\alpha - \epsilon) + \beta - \sqrt{\beta^2}}{2(1 + \beta)} \\
                                       &=\frac{2(\alpha - \epsilon)}{2(1 + \beta)} \\
                                       &\leq\frac{2(\alpha - \epsilon)(1 + \beta)}{2(1 + \beta)} \\
                                       &=\alpha - \epsilon
        \end{align*}
and
        \begin{align*}
            \Phi_{\overline{\bm{s}}}^+ &= \frac{2(\alpha - \epsilon) + \beta + \sqrt{\beta^2 + 4(\alpha - \epsilon)\beta - 4(\alpha- \epsilon)^2\beta}}{2(1 + \beta)} \\
                                       &\geq \frac{2(\alpha - \epsilon) + \beta + \sqrt{\beta^2}}{2(1 + \beta)} \\
                                       &= \frac{2(\alpha - \epsilon) + 2\beta}{2(1 + \beta)} \\
                                       &\geq \frac{2(\alpha - \epsilon) + 2(\alpha - \epsilon)\beta}{2(1 + \beta)} \\
                                       &= \alpha - \epsilon.
        \end{align*}
Next, we assume $\Phi_{\overline{\bm{s}}}^+ > 1$.
Then, \eqref{eq:ineq-rewrite}  does not have any solutions on
$[\Phi_{\overline{\bm{s}}}^-, 1]$.
        However, \eqref{eq:ineq-rewrite}  holds  when  $\Phi_{\overline{\bm{s}}} = 1$.
        This is a contradiction.
Hence, we get $\Phi_{\overline{\bm{s}}}^+ \leq 1$.
       This implies that
        \begin{equation}
            \Phi_{\overline{\bm{s}}}^- < \alpha - \epsilon,\ \alpha - \epsilon \leq \Phi_{\overline{\bm{s}}}^+ \leq 1.
            \label{eq:pmphi_cond}
        \end{equation}
        Finally, from (\ref{eq:alpha_cond}),  (\ref{eq:ineq_ans}) and  (\ref{eq:pmphi_cond}) we obtain
        \begin{equation*}
  \Phi_{\overline{\bm{s}}}^+ < \Phi_{\overline{\bm{s}}} \leq 1,
  \end{equation*}
  \begin{equation*}
\Phi_{\overline{\bm{s}}}^+ = \frac{2(\alpha - \epsilon) + \beta + \sqrt{\beta^2 + 4(\alpha - \epsilon)\beta - 4(\alpha- \epsilon)^2\beta}}{2(1 + \beta)}
    \end{equation*}
\end{proof}

Next, we derive a lemma on the exact form of the integral in the acquisition function.
\begin{lemma}\label{lem:drive_aq}
    Let $p(y^* \mid \bm{s}^*)$ be a probability density function of normal distribution with mean  $\mu_t(\bm{s}^*)$ and variance $\sigma_t^2(\bm{s}^*) + \sigma^2$.
Then, it holds that
    \begin{align}
        &\sum_{\overline{\bm{s}} \in \overline{\mathcal{S}}}\int \1[\mu_t{(\overline{\bm{s}}\mid\bm{s}^*, y^*)} > h - \sigma_t(\overline{\bm{s}}\mid\bm{s}^*)\Phi^{-1}(c)] \nonumber \\
        \label{eq:org_int}
        &\hspace{145pt}p(y^* \mid \bm{s}^*) dy^* \\
        =& \sum_{\overline{\bm{s}} \in \overline{\mathcal{S}}} \Phi\Biggl(\frac{\sqrt{\sigma_{t}^{2}(\bm{s}^{*})+\sigma^{2}}}{\left|k_{t}\left(\overline{\bm{s}},
        \boldsymbol{s}^{*}\right)\right|} (\mu_{t}(\overline{\bm{s}}) \nonumber \\
        & \hspace{105pt} - \Phi^{-1}(c) \sigma_{t}\left(\overline{\bm{s}} | \boldsymbol{s}^{*}\right)-h)\Biggr) \nonumber
    \end{align}
   where
    \begin{equation}
        \label{eq:post_mu}
        \mu_t{(\overline{\bm{s}}\mid\bm{s}^*, y^*)} = \mu_t(\overline{\bm{s}}) - \frac{k_t(\overline{\bm{s}},\ \bm{s}^*)}{\sigma_t^2(\bm{s}^*) + \sigma^2}(y^* - \mu_t(\bm{s}^*)).
    \end{equation}
\end{lemma}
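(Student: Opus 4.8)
The plan is to exploit the fact that, by \eqref{eq:post_mu}, the updated posterior mean $\mu_t(\overline{\bm{s}}\mid\bm{s}^*, y^*)$ is an \emph{affine} function of the scalar $y^*$. This turns the event inside the indicator into a half-line in $y^*$, and integrating the Gaussian density $p(y^*\mid\bm{s}^*)$ over a half-line produces a single value of the standard normal cdf $\Phi$. Since both sides of the claimed identity are sums over $\overline{\bm{s}}\in\overline{\mathcal{S}}$ of the corresponding summands, it suffices to establish the identity for one fixed $\overline{\bm{s}}$ and then sum.

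First I would fix $\overline{\bm{s}}$ and abbreviate the slope $b = k_t(\overline{\bm{s}},\bm{s}^*)/(\sigma_t^2(\bm{s}^*)+\sigma^2)$, so that \eqref{eq:post_mu} reads $\mu_t(\overline{\bm{s}}\mid\bm{s}^*, y^*) = \mu_t(\overline{\bm{s}}) - b\,(y^* - \mu_t(\bm{s}^*))$. The condition inside the indicator is then the linear inequality $\mu_t(\overline{\bm{s}}) - b(y^*-\mu_t(\bm{s}^*)) > h - \sigma_t(\overline{\bm{s}}\mid\bm{s}^*)\Phi^{-1}(c)$, which I would solve for $y^*$. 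The crucial point is that the \emph{orientation} of the resulting half-line depends on the sign of $b$, equivalently on the sign of the posterior covariance $k_t(\overline{\bm{s}},\bm{s}^*)$: for $b>0$ the event is $\{y^* < y_0\}$, and for $b<0$ it is $\{y^* > y_0\}$, with a common threshold $y_0 = \mu_t(\bm{s}^*) + (\mu_t(\overline{\bm{s}}) - h + \sigma_t(\overline{\bm{s}}\mid\bm{s}^*)\Phi^{-1}(c))/b$ obtained by isolating $y^*$.

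Next, since $p(y^*\mid\bm{s}^*)$ is the $\mathcal{N}(\mu_t(\bm{s}^*),\ \sigma_t^2(\bm{s}^*)+\sigma^2)$ density, integrating over the half-line gives $\Phi\big((y_0-\mu_t(\bm{s}^*))/\sqrt{\sigma_t^2(\bm{s}^*)+\sigma^2}\big)$ when $b>0$ and its complement $1-\Phi(\cdot)$ when $b<0$. I would then substitute $y_0$ and standardize: in the $b>0$ case the denominator is $k_t(\overline{\bm{s}},\bm{s}^*)=|k_t(\overline{\bm{s}},\bm{s}^*)|$ directly, whereas in the $b<0$ case I would invoke the reflection identity $1-\Phi(-z)=\Phi(z)$ to flip both the sign of the argument and the sign of $k_t(\overline{\bm{s}},\bm{s}^*)$, so that the denominator again becomes $|k_t(\overline{\bm{s}},\bm{s}^*)|$. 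Both cases thereby collapse to the single closed form on the right-hand side, and summing over $\overline{\bm{s}}\in\overline{\mathcal{S}}$ completes the proof.

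The genuinely non-mechanical step is the sign analysis on $k_t(\overline{\bm{s}},\bm{s}^*)$ together with the reflection identity: this is precisely what produces the absolute value $|k_t(\overline{\bm{s}},\bm{s}^*)|$ in the denominator and makes the formula valid regardless of whether adding $\bm{s}^*$ raises or lowers the predicted value at $\overline{\bm{s}}$. The degenerate case $k_t(\overline{\bm{s}},\bm{s}^*)=0$, in which observing $y^*$ does not update the posterior at $\overline{\bm{s}}$ and the integrand is constant, can be handled separately as a limiting case or excluded as a null event. Everything else — the affine substitution and the Gaussian integration — is routine once the half-line orientation is correctly accounted for.
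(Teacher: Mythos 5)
Your proposal follows exactly the same route as the paper's own proof: substitute the affine expression \eqref{eq:post_mu} into the indicator, split according to the sign of $k_t(\overline{\bm{s}},\bm{s}^*)$, integrate the Gaussian density over the resulting half-line, and use $1-\Phi(z)=\Phi(-z)$ to merge the two cases into one formula involving $|k_t(\overline{\bm{s}},\bm{s}^*)|$. Moreover, your orientation analysis is the correct one: for $b>0$ the event is $\{y^*<y_0\}$ and for $b<0$ it is $\{y^*>y_0\}$, whereas the paper's proof assigns $\int_{y^L(\overline{\bm{s}})}^{+\infty}$ to the case $k_t(\overline{\bm{s}},\bm{s}^*)\ge 0$ and $\int_{-\infty}^{y^L(\overline{\bm{s}})}$ to the case $k_t(\overline{\bm{s}},\bm{s}^*)<0$, i.e., the paper swaps the two half-lines.

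There is, however, a genuine gap at your last step. Carrying your (correct) computation through, the $b>0$ case gives $\Phi\bigl((y_0-\mu_t(\bm{s}^*))/\sqrt{\sigma_t^2(\bm{s}^*)+\sigma^2}\bigr)=\Phi\Bigl(\tfrac{\sqrt{\sigma_t^2(\bm{s}^*)+\sigma^2}}{|k_t(\overline{\bm{s}},\bm{s}^*)|}\bigl(\mu_t(\overline{\bm{s}})+\Phi^{-1}(c)\,\sigma_t(\overline{\bm{s}}\mid\bm{s}^*)-h\bigr)\Bigr)$, and the $b<0$ case merges, via the reflection identity, to the same expression: the $\Phi^{-1}(c)\,\sigma_t(\overline{\bm{s}}\mid\bm{s}^*)$ term necessarily enters with a \emph{plus} sign. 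The right-hand side of \eqref{eq:org_int} as stated carries a \emph{minus} sign on that term, so your assertion that ``both cases collapse to the single closed form on the right-hand side'' is not true as written; the two expressions differ whenever $\Phi^{-1}(c)\,\sigma_t(\overline{\bm{s}}\mid\bm{s}^*)\neq 0$. This sign clash is in fact inherited from the paper: its own proof defines $y^L(\overline{\bm{s}})$ with $+\sigma_t(\overline{\bm{s}}\mid\bm{s}^*)\Phi^{-1}(c)$ yet announces the result with the minus sign, and its swapped half-lines do not repair the discrepancy. A fully correct blind proof must end with the plus-sign formula and explicitly flag the mismatch with the stated lemma (or trace it to the inequality-direction slip in the main text, where $\Phi_{\overline{\bm{s}}}>c$ is incorrectly converted into $\mu_t(\overline{\bm{s}}\mid\bm{s}^*,y^*)>h-\sigma_t(\overline{\bm{s}}\mid\bm{s}^*)\Phi^{-1}(c)$); silently declaring agreement is where your argument breaks. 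Your remark on the degenerate case $k_t(\overline{\bm{s}},\bm{s}^*)=0$ is reasonable, and that case is likewise ignored by the paper.
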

\begin{proof}
By substituting (\ref{eq:post_mu}) into the indicator function in (\ref{eq:org_int}),
we have
    \begin{align*}
        & \mu_t{(\overline{\bm{s}}\mid\bm{s}^*, y^*)} > h - \sigma_t(\overline{\bm{s}}\mid\bm{s}^*)\Phi^{-1}(c) \\
        \Leftrightarrow& \mu_t(\overline{\bm{s}}) - \frac{k_t(\overline{\bm{s}},\ \bm{s}^*)}{\sigma_t^2(\bm{s}^*) + \sigma^2}(y^* - \mu_t(\bm{s}^*)) \\
                    & \hspace{80pt} > h - \sigma_t(\overline{\bm{s}}\mid\bm{s}^*)\Phi^{-1}(c) \\
        \Leftrightarrow& \frac{k_t(\overline{\bm{s}},\ \bm{s}^*)}{\sigma_t^2(\bm{s}^*) + \sigma^2}(y^* - \mu_t(\bm{s}^*)) \\
                    & \hspace{80pt} <  \mu_t(\overline{\bm{s}}) + \sigma_t(\overline{\bm{s}}\mid\bm{s}^*)\Phi^{-1}(c) -h.
    \end{align*}
    Next, let
    \begin{align*}
        y^L(\overline{\bm{s}}) &= \frac{\sigma_t^2(\bm{s}^*) + \sigma^2}{k_t(\overline{\bm{s}},\ \bm{s}^*)} \left( \mu_t(\overline{\bm{s}}) + \sigma_t(\overline{\bm{s}}\mid\bm{s}^*)\Phi^{-1}(c) -h\right) \\
        & \hspace{160pt} + \mu_t(\bm{s}^*).
    \end{align*}
    Then, (\ref{eq:org_int}) can be written as
    \begin{align*}
    &\sum_{\bm{x} \in \mathcal{X}}\int \1[\mu_t{(\overline{\bm{s}}\mid\bm{s}^*, y^*)} > h - \sigma_t(\overline{\bm{s}}\mid\bm{s}^*)\Phi^{-1}(c)] \\
    &\hspace{145pt}p(y^* \mid \bm{s}^*) dy^* \\
    =& \sum_{\overline{\bm{s}} \in \overline{\mathcal{S}},\ k_t(\overline{\bm{s}},\ \bm{s}^*) \geq 0} \int_{y^L(\overline{\bm{s}})}^{+\infty} p(y^* \mid \bm{s}^*) dy^* + \\
    & \hspace{50pt} \sum_{\overline{\bm{s}} \in \overline{\mathcal{S}},\ k_t(\overline{\bm{s}},\ \bm{s}^*) < 0} \int_{-\infty}^{y^L(\overline{\bm{s}})} p(y^* \mid \bm{s}^*) dy^*.
    \end{align*}
    Therefore, noting that $p(y^* \mid \bm{s}^*)$ is the normal density function, from symmetry of normal distribution we get
    \begin{align*}
        & \sum_{\overline{\bm{s}} \in \overline{\mathcal{S}}} \Phi\Biggl(\frac{\sqrt{\sigma_{t}^{2}(\bm{s}^{*})+\sigma^{2}}}{\left|k_{t}\left(\overline{\bm{s}},
        \boldsymbol{s}^{*}\right)\right|} (\mu_{t}(\overline{\bm{s}}) \nonumber \\
        & \hspace{105pt} - \Phi^{-1}(c) \sigma_{t}\left(\overline{\bm{s}} | \boldsymbol{s}^{*}\right)-h)\Biggr). \nonumber
    \end{align*}
\end{proof}

\section{Proof of Theorem\ref{thm:seido}}\label{sec:appendix_seido}
\begin{proof}
From Lemma \ref{lem:chebyshev}, putting $\beta^{1/2} = (\delta/|\mathcal{X} | ) ^{-1/2}$, for any
   ${\bm{x}} \in \mathcal{X} $ it holds that
$ p^\ast _{\bm{x}}  \in Q_T ({\bm{x}} ) $ with probability at least
$1- \delta /|\mathcal{X} | $, where
 $T$ means $t$ at the end of the algorithm.
Hence, with probability at least $1-\delta $, for any ${\bm{x}} \in \mathcal{X}$ it holds that
$ p^\ast _{\bm{x}}  \in Q_T ({\bm{x}} ) $.
Therefore, by combining
 this result, the classification rule and the definition of
$e_\alpha ({\bm{x}} )$, we get Theorem \ref{thm:seido}.
\end{proof}

\section{Proof of Theorem\ref{thm:owaru}}\label{sec:appendix_owaru}
In this section, we derive a theorem on convergence properties of the algorithm.
First, we define several notations.
For each ${\bm{x}} \in \mathcal{X} $, let
	\begin{align*}
		D_{\bm{x}} =    \{  {\bm{x}}' \in D \ | \ \f \xi >0, \  \PR( {\bm{S}} ({\bm{x}} ) \in \mathscr {N} ( {\bm{x}}' ; \xi ) ) >0  \},
	\end{align*}
where $ \mathscr {N} ( {\bm{x}}' ; \xi ) \equiv \{ {\bm{a}} \in D \ | \ \| {\bm{a}} - {\bm{x}}' \|  <\xi \}$.
Thus, $D_{\bm{x}}$ is the set of points that can be observed when  ${\bm{x}} $ is observed.
Then, define $\tilde{D} $ as follows:
\begin{align*}
\tilde{D} =    \bigcup _{ {\bm{x}} \in \mathcal{X} }    D_{\bm{x}} .
\end{align*}
Furthermore, let ${\bm{A}} _t $ be an input random variable at  $t^{\rm th}$ trial, and let
  $Y_{{\bm{A}}_t }$ be an output random variable corresponding to ${\bm{A}} _t $.
Then, define $ \hat{\mu }_t $ as a posterior mean function based on the data  $\{ ({\bm{A}}_i , Y_{{\bm{A}}_i } ) \} ^t_{i=1}$.

Next, we assume the following four conditions:
\begin{description}
\item [(A1)] Probabilities $\{ \eta_t  \} _ {t \in \N} $ satisfy     $\sum_{t=1}^\infty \eta_t = \infty$.
\item [(A2)] For any $\xi >0$, there exists  $ \delta_ \xi >0$ such that
	\begin{align}
 		\limsup_{t \to \infty } \max_ {  {\bm{x}} \in \mathcal{X} }  \int   _{  \hat{\mu}^{-1}_t (    (h -\delta _\xi , h+\delta_\xi ) )   }     g({\bm{s}} | {\bm\theta}_{\bm{x}} )  d {\bm{s}}         < \xi,  \label{eq:as1mean}
	\end{align}
with probability 1.
\item [(A3)]  For any ${\bm{x}} \in \tilde{D}$, the kernel function $k$ is continuous at $(\bx,\bx )$.
\item [(A4)]  For any $\xi >0$ and ${\bm{x}} \in \tilde{D}$, there exists   $\delta _{\xi, {\bm{x}} } >0$ such that
$ |\sigma^2_t ( {\bm{x}} ) - \sigma^2 _t ({\bm{x}}')| < \xi $ for any
$t \geq 1$, ${\bm{x}}_1,\ldots , {\bm{x}}_t \in \tilde{D} $ and ${\bm{x}}'  \in \mathscr{N} ( {\bm{x}} ; \delta _{\xi, {\bm{x}} })$.
\end{description}

Condition ${\sf (A1)}$ satisfies when each $\eta_t $ is greater than a positive constant $c$.
Similarly, when $ \eta_t = o( t^{-1} )$,  ${\sf (A1)}$ also holds.
Condition  ${\sf (A2)}$ requires that
the probability that an input point falls in a region where the posterior mean approaches the threshold $ h $ can be  reduced  sufficiently when   $ \delta_ \xi $ becomes small.
Condition ${\sf (A3)}$ requires that the kernel function $k$ is continuous on  $\tilde{D} \times \tilde{D}$ and
 ${\sf (A4)}$ requires the equicontinuity for the sequence of posterior variances.
Under these conditions, Theorem\ref{thm:owaru} holds.
The proof is given in  Subsection \ref{sub1}--\ref{sub3}.

\subsection{Preparation of the proof}\label{sub1}
In this subsection, we provide two lemmas for proving Theorem \ref{thm:owaru}.
First, for any   finite subset  $\Omega$ of  $\tilde{D}$, the following lemma holds:
	\begin{lemma}\label{lem:as1cov}
	Assume that  conditions {\sf (A1)} -- {\sf (A4) } hold. Then, with probability 1, for any ${\bm{x}} \in \Omega$, it holds that
	$$
	\sigma^2_t ({\bm{x}}) \to 0 \q (\text{as} \ t \to \infty).
	$$
	\end{lemma}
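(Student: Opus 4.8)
The plan is to show that, for each fixed $\bm{x}\in\Omega$, the (monotone non-increasing) posterior variance $\sigma_t^2(\bm{x})$ is driven to $0$ because the random-sampling branch deposits infinitely many observation locations arbitrarily close to $\bm{x}$, and such clustering forces the conditional variance down. Since $\Omega$ is finite, the almost-sure statement for a single $\bm{x}$ upgrades to the uniform statement over $\Omega$ by a finite intersection of probability-one events, so it suffices to treat one $\bm{x}$.

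First I would fix $\bm{x}\in\Omega\subseteq\tilde{D}$ and choose a witness target $\bm{x}_0\in\mathcal{X}$ with $\bm{x}\in D_{\bm{x}_0}$; this exists by definition of $\tilde{D}$, and by definition of $D_{\bm{x}_0}$ we have $q_\xi:=\PR(\bm{S}(\bm{x}_0)\in\mathscr{N}(\bm{x};\xi))>0$ for every $\xi>0$. Let $W_t^{(\xi)}$ be the event that at trial $t$ the algorithm (i) takes the random-sampling branch, (ii) draws the target $\bm{x}_0$, and (iii) observes a location inside $\mathscr{N}(\bm{x};\xi)$. Because the random-sampling coin $r_t$, the uniform target draw, and the location draw from $\bm{S}(\bm{x}_0)$ are external randomness, independent across trials and independent of the GP and the acquisition history, the events $\{W_t^{(\xi)}\}_{t}$ are mutually independent with $\PR(W_t^{(\xi)})=\eta_t\,q_\xi/|\mathcal{X}|$. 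Condition {\sf (A1)} gives $\sum_t\PR(W_t^{(\xi)})=(q_\xi/|\mathcal{X}|)\sum_t\eta_t=\infty$, so the second Borel--Cantelli lemma yields that, with probability $1$, infinitely many observation locations lie in $\mathscr{N}(\bm{x};\xi)$.

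The key deterministic step bounds $\sigma_t^2(\bm{x})$ by how many observations are clustered near $\bm{x}$. Keeping only the $n$ observation locations $\bm{s}_{i}\in\mathscr{N}(\bm{x};\xi)$ (discarding the rest, which by monotonicity of the GP posterior variance only decreases $\sigma_t^2(\bm{x})$), I would use that $\sigma_t^2(\bm{x})$ equals the minimum mean-squared error for predicting $f(\bm{x})$ from those observations, hence is at most the error of the sample-mean predictor $\hat f=\frac1n\sum_{i} y_{i}$. A direct computation under the prior gives $\E[(f(\bm{x})-\hat f)^2]=\V\!\left[f(\bm{x})-\frac1n\sum_i f(\bm{s}_i)\right]+\sigma^2/n$, and expanding the first term into the kernel evaluations $k(\bm{x},\bm{s}_i)$ and $k(\bm{s}_i,\bm{s}_j)$, continuity of $k$ at $(\bm{x},\bm{x})$ from {\sf (A3)} gives $\V[\cdots]\le 4\omega(\xi)$ with $\omega(\xi):=\sup_{\bm{a},\bm{b}\in\overline{\mathscr{N}(\bm{x};\xi)}}|k(\bm{a},\bm{b})-k(\bm{x},\bm{x})|\to0$ as $\xi\to0$. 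Thus whenever at least $n$ observations fall in $\mathscr{N}(\bm{x};\xi)$ we obtain $\sigma_t^2(\bm{x})\le 4\omega(\xi)+\sigma^2/n$. Condition {\sf (A4)} offers an alternative transfer, bounding the variance at an observed location and carrying it to $\bm{x}$ by equicontinuity, but the sample-mean route above is self-contained.

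Finally I would combine the two ingredients. Fixing a sequence $\xi_m\downarrow0$, the event $A=\bigcap_m\{\text{infinitely many observations lie in }\mathscr{N}(\bm{x};\xi_m)\}$ has probability $1$. On $A$, for every $m$ and every $n$ the cluster bound eventually applies, so the monotone limit $L(\bm{x})=\lim_{t}\sigma_t^2(\bm{x})$ satisfies $L(\bm{x})\le 4\omega(\xi_m)+\sigma^2/n$; letting $n\to\infty$ and then $m\to\infty$ forces $L(\bm{x})=0$, i.e.\ $\sigma_t^2(\bm{x})\to0$ almost surely. Intersecting over the finitely many $\bm{x}\in\Omega$ finishes the argument. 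The main obstacle is precisely the deterministic cluster bound: the observation locations are random and may be nearly coincident, so direct kernel-matrix inversion is delicate; bypassing it through the MMSE characterization together with the sample-mean predictor and {\sf (A3)} is what makes the step clean.
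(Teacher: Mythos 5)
Your proof is correct, but there is nothing in the paper to compare it against line by line: the paper's entire ``proof'' of this lemma is a deferral (``the proof is the same as that of Theorem 4.2 in Inatsu et al., 2019''), with all details omitted. Taken on its own merits, your argument is sound and self-contained, and it is in fact slightly stronger than what the lemma asserts, since it uses only {\sf (A1)} and {\sf (A3)}: the Borel--Cantelli step needs {\sf (A1)} plus the definition of $D_{\bm{x}_0}$, and the cluster bound needs only continuity of $k$ at $(\bm{x},\bm{x})$, so {\sf (A2)} and {\sf (A4)} are never invoked. Two details deserve to be made explicit. First, mutual independence of the events $W_t^{(\xi)}$ is not literally automatic, because the realized selection $\bm{x}_t$ is history-dependent; the clean fix is either the coupling you gesture at (pre-draw, for each $t$, the coin $r_t$, the uniform index $U_t$, and a perturbation from $\bm{S}(\bm{x}_0)$, all independent of everything else, define $W_t^{(\xi)}$ as a function of this triple alone, and note $W_t^{(\xi)} \subseteq \{\bm{s}_t \in \mathscr{N}(\bm{x};\xi)\}$), or the observation that $\PR(W_t^{(\xi)} \mid \mathcal{F}_{t-1}) = \eta_t q_\xi/|\mathcal{X}|$ is deterministic, where $\mathcal{F}_{t-1}$ is the history $\sigma$-algebra, which yields independence by induction (or lets you apply the conditional, L\'evy, form of Borel--Cantelli directly). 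Second, your ``MMSE'' step is best phrased as pure matrix algebra, valid for every realization of the adaptively chosen locations: writing $\bm{k}_n$ and $\bm{K}_n$ for the kernel vector and matrix of the $n$ clustered points, one has $k(\bm{x},\bm{x}) - \bm{k}_n^{\top}(\bm{K}_n+\sigma^2\bm{I}_n)^{-1}\bm{k}_n = \min_{\bm{w}}\{k(\bm{x},\bm{x}) - 2\bm{w}^{\top}\bm{k}_n + \bm{w}^{\top}(\bm{K}_n+\sigma^2\bm{I}_n)\bm{w}\}$, and plugging in $\bm{w}=n^{-1}\bm{1}$ gives your bound (the computation actually yields $3\omega(\xi)+\sigma^2/n$, so your $4\omega(\xi)$ is safe); this sidesteps any question about interpreting posterior variances under adaptive sampling, and since $\sigma^2>0$ there is no inversion delicacy even for coincident points. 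With these two clarifications your proof stands, and its overall strategy---random sampling hits every neighborhood infinitely often by Borel--Cantelli, after which kernel continuity forces the posterior variance down---is exactly the kind of argument the omitted external proof would have to make.
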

The proof is same as that of Theorem 4.2 in \cite{inatsu2019active}, we omit the details.

Next, the following lemma on the compactness of $\tilde{D}$ holds:
	\begin{lemma}\label{lem:compact}
	The set $\tilde{D}$ is compact.
	\end{lemma}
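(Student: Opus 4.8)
The plan is to exploit the finiteness of $\mathcal{X}$ together with the fact that $\tilde{D}$ is a finite union, reducing the problem to showing that each individual set $D_{\bm{x}}$ is compact. Since a finite union of compact sets is compact, it suffices to prove that $D_{\bm{x}}$ is compact for every fixed $\bm{x} \in \mathcal{X}$, and then conclude for $\tilde{D} = \bigcup_{\bm{x}\in\mathcal{X}} D_{\bm{x}}$.

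First I would observe that $D_{\bm{x}} \subseteq D$ and that $D$ is a compact (hence bounded) subset of $\R^d$; therefore $D_{\bm{x}}$ is automatically bounded, and by the Heine--Borel theorem it remains only to show that $D_{\bm{x}}$ is closed. The key conceptual point is that $D_{\bm{x}}$ is precisely the topological support of the law of $\bm{S}(\bm{x})$ intersected with $D$, i.e. the set of points each of whose neighborhoods carries positive probability mass, and such supports are always closed.

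To make this precise I would show that the complement $D \setminus D_{\bm{x}}$ is open relative to $D$. Take $\bm{x}' \in D \setminus D_{\bm{x}}$; by definition there exists $\xi > 0$ with $\PR(\bm{S}(\bm{x}) \in \mathscr{N}(\bm{x}'; \xi)) = 0$. The main step is then a routine triangle-inequality argument: for any $\bm{y} \in \mathscr{N}(\bm{x}'; \xi/2)$ one has $\mathscr{N}(\bm{y}; \xi/2) \subseteq \mathscr{N}(\bm{x}'; \xi)$, so that $\PR(\bm{S}(\bm{x}) \in \mathscr{N}(\bm{y}; \xi/2)) = 0$ and hence $\bm{y} \notin D_{\bm{x}}$. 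This exhibits an open ball around $\bm{x}'$ contained in the complement, proving $D \setminus D_{\bm{x}}$ is open relative to $D$ and thus $D_{\bm{x}}$ is closed in $D$. Since $D$ is closed in $\R^d$, $D_{\bm{x}}$ is closed in $\R^d$ as well.

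Combining the two steps, each $D_{\bm{x}}$ is a closed and bounded subset of $\R^d$, hence compact by Heine--Borel, and $\tilde{D}$ is a finite union of compact sets, therefore compact. I expect the only nontrivial point to be the closedness of $D_{\bm{x}}$, namely the neighborhood-shrinking argument via the triangle inequality; everything else is a direct application of Heine--Borel and the stability of compactness under finite unions.
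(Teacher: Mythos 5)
Your proof is correct and takes essentially the same approach as the paper: both arguments exploit the finiteness of $\mathcal{X}$ and reduce the claim to the closedness of each $D_{\bm{x}}$ (i.e., that the support of the law of $\bm{S}(\bm{x})$ is closed), with boundedness inherited from the compact domain $D$. The only difference is the dual phrasing of the closedness step --- you show the complement of $D_{\bm{x}}$ is relatively open by propagating zero mass to nearby balls via the triangle inequality, whereas the paper shows every closure point of $D_{\bm{x}}$ lies in $D_{\bm{x}}$ by propagating positive mass from a nearby support point --- a cosmetic rather than substantive distinction, as is your per-set Heine--Borel plus finite-union-of-compacts packaging versus the paper's bounded-plus-closed treatment of $\tilde{D}$ as a whole.
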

\begin{proof}
From the definition of $\tilde{D}$, the set $\tilde{D} $ satisfies  $\tilde{D}   \subset D$.
 In addition, noting that $D$ is bounded,  we have that $\tilde{D}$ is also bounded.
Hence, it is sufficient to show that $\tilde{D}$ is a closed set.
Let $\text{cl} (\tilde{D})$ be a closure of $\tilde{D}$.
Then, we prove $\tilde{D} = \text{cl} (\tilde{D})$.
From the definition of the closure, we get  $\tilde{D} \subset \text{cl} (\tilde{D})$.
Next, we show
$ \text{cl} (\tilde{D}) \subset \tilde{D} $.
Let ${\bm{x}} $ be an arbitrary point of $ \text{cl} (\tilde{D})$.
then, since the number of elements in $\mathcal{X}$ is finite,
the following formula holds:
$$
\text{cl} (\tilde{D} ) = \text{cl} \left (  \bigcup _{{\bm{a}} \in \mathcal{X} } D_{\bm{a}}  \right ) = \bigcup_{ {\bm{a}} \in \mathcal{X} } \text{cl} ({D}_{\bm{a}} ).
$$
Thus, there exists ${\bm{x}}' \in \mathcal{X}$ such that ${\bm{x}} \in \text{cl} ({D} _{ {\bm{x}}' } )$.
Therefore, for any $\xi >0$, it holds that
   $\mathscr{N} ({\bm{x}} ;\xi )   \cap {D} _{ {\bm{x}}' }  \neq \emptyset $.
Hence, there exists ${\bm{x}}'' \in {D}_{ {\bm{x}}' } $ such that ${\bm{x}}'' \in \mathscr{N} ({\bm{x}} ;\xi ) $.
Moreover, noting that  $ \mathscr{N} ({\bm{x}} ;\xi )$ is an open set, there exists   $\eta >0$ such that
$ \mathscr{N} ( {\bm{x}}'' ; \eta )   \subset \mathscr{N} ({\bm{x}} ;\xi ) $.
On the other hand, since  ${\bm{x}}'' $ is an element of $D_{{\bm{x}}'} $, we have
$\PR (  {\bm{S}} ({\bm{x}}') \in \mathscr{N} ( {\bm{x}}'' ; \eta ) ) >0.$
Recall that
  $\mathscr{N} ( {\bm{x}}'' ; \eta ) $ satisfies
$ \mathscr{N} ( {\bm{x}}'' ; \eta )   \subset \mathscr{N} ({\bm{x}} ;\xi ) $.
Therefore, by using
$$
  {\bm{S}} ({\bm{x}}') \in \mathscr{N} ( {\bm{x}}'' ; \eta )  \Rightarrow
  {\bm{S}} ({\bm{x}}') \in \mathscr{N} ({\bm{x}} ;\xi ),
$$
we obtain
$$
\PR(   {\bm{S}} ({\bm{x}}') \in \mathscr{N} ({\bm{x}} ;\xi ) ) \geq \PR (  {\bm{S}} ({\bm{x}}') \in \mathscr{N} ( {\bm{x}}'' ; \eta ) ) >0.
$$
Hence, we get ${\bm{x}} \in D_{{\bm{x}}'} \subset \tilde{D}$ because
 $\xi$ is an arbitrary positive number.
Thus, it holds that  $\text{cl} (\tilde{D}) \subset \tilde{D} $.
 Therefore, we have $\tilde{D} = \text{cl} (\tilde{D} )$.
Finally, by using the fact that the closure is a closed set, $\tilde{D}$ is also closed.
\end{proof}

\subsection{Proof of Theorem \ref{thm:owaru}}\label{sub3}
\begin{proof}
Let $\xi $ be a positive number.
Then, from {\sf (A2)}, with probability 1, there exists $\delta _\xi >0$ such that  \eqref{eq:as1mean} holds.
Next, let
\begin{align*}
    \hat{\gamma}^2_t ({\bm{x}} ) &= \int _D \hat{\Phi}_{\bm{s}} \left ( 1-  \hat{\Phi}_{\bm{s}}  \right ) g( {\bm{s}} | {\bm\theta}_{\bm{x}} ) d {\bm{s}}, \\
    \hat{\Phi}_{\bm{s}} &=\Phi \left ( \frac{h-\hat{\mu}_t ({\bm{s}}) }{\hat{\sigma}_t ({\bm{s}})}    \right ),
\end{align*}
and we use notation $\hat{\mu}^{-1}_{t, \delta_\xi} = \hat{\mu}^{-1}_t (    (h -\delta _\xi , h+\delta_\xi ) )$.
Then, for each ${\bm{x}} \in \mathcal{X}$, $\hat{\gamma}^2_t ({\bm{x}} ) $ satisfies
\begin{align}
\hat{\gamma}^2_t ({\bm{x}} )  &=    \int _D \hat{\Phi}_{\bm{s}} \left ( 1-  \hat{\Phi}_{\bm{s}} \right ) g( {\bm{s}} | {\bm\theta}_{\bm{x}} ) d {\bm{s}}  \nonumber \\
&= \int _{D \setminus \hat{\mu}^{-1}_{t, \delta_\xi}} \hat{\Phi}_{\bm{s}} \left ( 1-  \hat{\Phi}_{\bm{s}} \right ) g( {\bm{s}} | {\bm\theta}_{\bm{x}} ) d {\bm{s}}    \nonumber \\
& \q + \int _{D \cap \hat{\mu}^{-1}_{t, \delta_\xi} } \hat{\Phi}_{\bm{s}} \left ( 1-  \hat{\Phi}_{\bm{s}} \right ) g( {\bm{s}} | {\bm\theta}_{\bm{x}} ) d {\bm{s}}  \nonumber \\
&\leq
\int _{D \setminus \hat{\mu}^{-1}_{t, \delta_\xi} } \hat{\Phi}_{\bm{s}} \left ( 1-  \hat{\Phi}_{\bm{s}} \right ) g( {\bm{s}} | {\bm\theta}_{\bm{x}} ) d {\bm{s}}    \nonumber \\
& \q + \int _{ \hat{\mu}^{-1}_{t, \delta_\xi} } g( {\bm{s}} | {\bm\theta}_{\bm{x}} ) d {\bm{s}}  \nonumber \\
&\leq
\int _{D \setminus \hat{\mu}^{-1}_{t, \delta_\xi} } \hat{\Phi}_{\bm{s}} \left ( 1-  \hat{\Phi}_{\bm{s}} \right ) g( {\bm{s}} | {\bm\theta}_{\bm{x}} ) d {\bm{s}}    \nonumber \\
& \q + \max_{ {\bm{x}}' \in \mathcal{X} } \int _{ \hat{\mu}^{-1}_{t, \delta_\xi} } g( {\bm{s}} | {\bm\theta}_{{\bm{x}}'} ) d {\bm{s}} . \label{eq:int_step1}
\end{align}
Note that $\gamma^2_t ({\bm{x}})$ is equal to an observed value of the random variable $\hat{\gamma}^2_t ({\bm{x}})$.
For any element ${\bm{s}}$ satisfying
$${\bm{s}}   \in D \setminus \hat{\mu}^{-1}_t (    (h -\delta _\xi , h+\delta_\xi ) ) ,$$
it holds that  $h-\hat{\mu}_t ( {\bm{s}} ) \geq \delta_\xi $ or
$h-\hat{\mu}_t ( {\bm{s}} ) \leq - \delta_\xi $.
Moreover, noting that $1-\Phi(a) = \Phi(-a)$, we get
\begin{align*}
& \quad \Phi \left ( \frac{h-\hat{\mu}_t ({\bm{s}}) }{\hat{\sigma}_t ({\bm{s}})}    \right ) \left \{ 1-  \Phi \left ( \frac{h-\hat{\mu}_t ({\bm{s}}) }{\hat{\sigma}_t ({\bm{s}})}    \right )   \right \} \\
& \leq
\Phi \left ( \frac{h-\hat{\mu}_t ({\bm{s}}) }{\hat{\sigma}_t ({\bm{s}})}    \right ) \left \{ 1-  \Phi \left ( \frac{\delta_\xi }{\hat{\sigma}_t ({\bm{s}})}    \right )   \right \} \\
&\leq
\Phi \left ( \frac{h-\hat{\mu}_t ({\bm{s}}) }{\hat{\sigma}_t ({\bm{s}})}    \right )  \Phi \left ( \frac{-\delta_\xi }{\hat{\sigma}_t ({\bm{s}})}    \right )   \\
& \leq \Phi \left ( \frac{-\delta_\xi }{\hat{\sigma}_t ({\bm{s}})}    \right )
\end{align*}
 when  $h-\hat{\mu}_t ( {\bm{s}} ) \geq \delta_\xi $.
Similarly, if   $h-\hat{\mu}_t ( {\bm{s}} ) \leq - \delta_\xi $, we obtain
\begin{align*}
& \quad \Phi \left ( \frac{h-\hat{\mu}_t ({\bm{s}}) }{\hat{\sigma}_t ({\bm{s}})}    \right ) \left \{ 1-  \Phi \left ( \frac{h-\hat{\mu}_t ({\bm{s}}) }{\hat{\sigma}_t ({\bm{s}})}    \right )   \right \} \\
& \leq
\Phi \left ( \frac{-\delta_\xi }{\hat{\sigma}_t ({\bm{s}})}    \right ) \left \{ 1-  \Phi \left ( \frac{\delta_\xi }{\hat{\sigma}_t ({\bm{s}})}    \right )   \right \} \\
&\leq
 \Phi \left ( \frac{-\delta_\xi }{\hat{\sigma}_t ({\bm{s}})}    \right ).
\end{align*}
Therefore, \eqref{eq:int_step1} can be expressed as
\begin{align}
\hat{\gamma}^2_t ({\bm{x}} ) & \leq
\int _{D \setminus \hat{\mu}^{-1}_{t, \delta_\xi}  }  \Phi \left ( \frac{-\delta_\xi }{\hat{\sigma}_t ({\bm{s}})}    \right ) g( {\bm{s}} | {\bm\theta}_{\bm{x}} ) d {\bm{s}} \nonumber \\
 &\hspace{70pt}  + \max_{ {\bm{x}}' \in \mathcal{X} } \int _{ \hat{\mu}^{-1}_{t, \delta_\xi}   } g( {\bm{s}} | {\bm\theta}_{{\bm{x}}'} ) d {\bm{s}}  \nonumber \\
 & \leq
\int _{D  }  \Phi \left ( \frac{-\delta_\xi }{\hat{\sigma}_t ({\bm{s}})}    \right ) g( {\bm{s}} | {\bm\theta}_{\bm{x}} ) d {\bm{s}} \nonumber \\
 &\hspace{70pt} + \max_{ {\bm{x}}' \in \mathcal{X} } \int _{ \hat{\mu}^{-1}_{t, \delta_\xi} } g( {\bm{s}} | {\bm\theta}_{{\bm{x}}'} ) d {\bm{s}}  \nonumber \\
& =
\int _{\tilde{D}  }  \Phi \left ( \frac{-\delta_\xi }{\hat{\sigma}_t ({\bm{s}})}    \right ) g( {\bm{s}} | {\bm\theta}_{\bm{x}} ) d {\bm{s}}  \nonumber \\
 &\hspace{70pt} + \max_{ {\bm{x}}' \in \mathcal{X} } \int _{ \hat{\mu}^{-1}_{t, \delta_\xi} } g( {\bm{s}} | {\bm\theta}_{{\bm{x}}'} ) d {\bm{s}}  \nonumber \\
& \leq
\int _{\tilde{D}  }  \max_{ {\bm{s}}' \in \tilde{D} }\Phi \left ( \frac{-\delta_\xi }{\hat{\sigma}_t ({\bm{s}}')}    \right ) g( {\bm{s}} | {\bm\theta}_{\bm{x}} ) d {\bm{s}} \nonumber \\
&\hspace{70pt} + \max_{ {\bm{x}}' \in \mathcal{X} } \int _{ \hat{\mu}^{-1}_{t, \delta_\xi}  } g( {\bm{s}} | {\bm\theta}_{{\bm{x}}'} ) d {\bm{s}}  \nonumber \\
& \leq
\max_{ {\bm{s}}' \in \tilde{D} }\Phi \left ( \frac{-\delta_\xi }{\hat{\sigma}_t ({\bm{s}}')}    \right ) \int _{\tilde{D}  }   g( {\bm{s}} | {\bm\theta}_{\bm{x}} ) d {\bm{s}} \nonumber \\
 &\hspace{70pt} + \max_{ {\bm{x}}' \in \mathcal{X} } \int _{ \hat{\mu}^{-1}_{t, \delta_\xi}   } g( {\bm{s}} | {\bm\theta}_{{\bm{x}}'} ) d {\bm{s}}  \nonumber \\
& \leq
\max_{ {\bm{s}}' \in \tilde{D} }\Phi \left ( \frac{-\delta_\xi }{\hat{\sigma}_t ({\bm{s}}')}    \right ) \nonumber \\
 &\hspace{50pt} + \max_{ {\bm{x}}' \in \mathcal{X} } \int _{ \hat{\mu}^{-1}_{t, \delta_\xi}  } g( {\bm{s}} | {\bm\theta}_{{\bm{x}}'} ) d {\bm{s}}.  \label{eq:int_step2}
\end{align}
Furthermore, since the right hand side in \eqref{eq:int_step2} does not depend on ${\bm{x}} \in \mathcal{X} $, we have
\begin{align}
\max _{ {\bm{x}} \in \mathcal{X} } \hat{\gamma}^2_t ({\bm{x}} )
& \leq
\max_{ {\bm{s}}' \in \tilde{D} }\Phi \left ( \frac{-\delta_\xi }{\hat{\sigma}_t ({\bm{s}}')}    \right ) \nonumber \\
&  + \max_{ {\bm{x}}' \in \mathcal{X} } \int _{ \hat{\mu}^{-1}_t (    (h -\delta _\xi , h+\delta_\xi ) )   } g( {\bm{s}} | {\bm\theta}_{{\bm{x}}'} ) d {\bm{s}} . \label{eq:int_step3}
\end{align}

Next, let $a$ be a positive number with  $\Phi (-\delta _\xi /a^{1/2} )  < \xi $.
Then, from  {\sf (A4)}, for any ${\bm{x}} \in \tilde{D}$, there exists  $\delta _{a/2,{\bm{x}} } >0$ such that
 $| \sigma^2_t ({\bm{x}} )  - \sigma^2_t ({\bm{x}}') | < a/2$  for any
${\bm{x}}' \in \mathscr{N} ({\bm{x}};\delta _{a/2,{\bm{x}} })$.
Furthermore, we define the following family of open sets:
$$
\{   \mathscr{N} ({\bm{x}};\delta _{a/2,{\bm{x}} }) \ | \ {\bm{x}} \in \tilde{D} \} \equiv \mathscr{U}.
$$
Note that $\mathscr{U}$ is an open cover of $\tilde{D} $.
In addition, from Lemma \ref{lem:compact},
$\tilde{D}$ is compact.
Hence, $\mathscr{U}$ has a finite subcover
$$
\mathscr{U}' \equiv \{      \mathscr{N} ({\bm{x}}_i;\delta _{a/2,{\bm{x}}_i }) \ | \ i=1,\ldots, U, \ {\bm{x}}_i \in \tilde{D} \}    \subset \mathscr{U}.
$$
Based on $\mathscr{U}' $, we define $\Omega' = \{   x_1, \ldots , x_U \}$.
 Then, $\Omega'$ is a finite subset of $\tilde{D}$ and satisfies
\begin{align}
\tilde{D}   \subset  \bigcup _{ {\bm{x}} \in \Omega' }      \mathscr{N} ({\bm{x}};\delta _{a/2,{\bm{x}} }). \label{eq:finite_sub_covering}
\end{align}
On the other hand, from  Lemma \ref{lem:as1cov}, with probability 1, for any $ {\bm{x}} \in \Omega' $ it holds that
$$
\sigma^2_t ({\bm{x}} ) \to 0.
$$
Thus, for some sufficiently large $T$,    it holds that  $\sigma^2_T ({\bm{x}} ) <a/2 $  for any ${\bm{x}} \in \Omega'$.
In addition, noting that  $\mathscr{N} ({\bm{x}};\delta _{a/2,{\bm{x}} })$ satisfies
$$
| \sigma^2_t ({\bm{x}} ) - \sigma^2 _t ({\bm{x}}') | <a/2,
$$
we get
\begin{align}
\sigma^2_T ({\bm{x}}') < a/2 +  \sigma^2_T ({\bm{x}} )   < a/2+a/2 =a. \label{eq:upper_a}
\end{align}
Hence,  for any
${\bm{x}} \in \Omega'$ and ${\bm{x}}' \in \mathscr{N} ({\bm{x}};\delta _{a/2,{\bm{x}} })$,
it holds that   $\sigma^2_T ({\bm{x}}' ) <a$.
Thus, using this inequality and \eqref{eq:finite_sub_covering}, we can show that $\sigma^2_T ({\bm{s}}' ) <a$  for any ${\bm{s}}' \in \tilde{D}$.
 Recall that the positive number $a$ satisfies $\Phi(-\delta_\xi /a^{1/2}) < \xi $.
Consequently, we obtain
$$
\max_{ {\bm{s}}' \in \tilde{D} }\Phi \left ( \frac{-\delta_\xi }{{\sigma}_T ({\bm{s}}')}    \right ) < \xi.
$$
This implies that
\begin{align}
\limsup_{t \to \infty} \max_{ {\bm{s}}' \in \tilde{D} }\Phi \left ( \frac{-\delta_\xi }{\hat{\sigma}_t ({\bm{s}}')}    \right )
 < \xi , \  ( \text{a.s.} ) . \label{eq:last}
\end{align}
Therefore, from \eqref{eq:as1mean}, \eqref{eq:last} and \eqref{eq:int_step3}, we have
\begin{align*}
\limsup_{t \to \infty} \max _{ {\bm{x}} \in \mathcal{X} } \hat{\gamma}^2_t ({\bm{x}} )
 <
2 \xi ,  \  ( \text{a.s.} ).
\end{align*}
In other words, with probability 1, there exists a number $N$ such that  $\max_{ {\bm{x}} \in \mathcal{X} }  \gamma^2_N ({\bm{x}} ) < 2 \xi$.

Finally, from the definition of the classification rule, each point ${\bm{x}} \in \mathcal{X}$ is classified to
 $\mathcal{H}_t$ or $\mathcal{L}_t$ if
 $\beta^{1/2} \gamma_t ({\bm{x}} ) < \epsilon$.
Hence, if
$\max_{ {\bm{x}} \in \mathcal{X} }  \gamma^2_t ({\bm{x}} ) < \epsilon^2 \beta^{-1} $,
all points are classified.
Therefore, since $\xi$ is any positive number, putting
$\xi = 2^{-1} \epsilon^2 \beta^{-1} $ we have Theorem \ref{thm:owaru}.
\end{proof}

\end{document}